\DeclareMathOperator*{\argmax}{arg \, max}
\newcommand{\Mod}[1]{\ (\mathrm{mod}\ #1)}
\DeclareMathOperator*{\Lip}{Lip}
\DeclareMathOperator*{\conj}{conj}
\newcolumntype{C}[1]{>{\centering\let\newline\\\arraybackslash\hspace{0pt}}m{#1}}
\begin{document}

\title{Tight and Efficient Upper Bound on Spectral Norm of Convolutional Layers} 

\titlerunning{Spectral Norm of Convolution}

\author{Ekaterina Grishina\orcidlink{0009-0004-7060-0391} \and
Mikhail Gorbunov\orcidlink{0009-0006-9461-4266} \and
Maxim Rakhuba\orcidlink{0000-0001-7606-7322}}

\authorrunning{E.~Grishina et al.}
\institute{HSE University \\
\email{ergrishina@edu.hse.ru}}

\maketitle

\begin{abstract}
Controlling the spectral norm of the Jacobian matrix, which is related to the convolution operation, has been shown to improve generalization, training stability and robustness in CNNs.
Existing methods for computing the norm either tend to overestimate it or their performance may deteriorate quickly with increasing the input and kernel sizes.
In this paper, we demonstrate that the tensor version of the spectral norm of a four-dimensional convolution kernel, up to a constant factor, serves as an upper bound for the spectral norm of the Jacobian matrix associated with the convolution operation.
This new upper bound is independent of the input image resolution, differentiable and can be efficiently calculated during training. Through experiments, we demonstrate how this new bound can be used to improve the performance of convolutional architectures.

\keywords{Spectral norm \and Convolutional layer \and Lipschitz constant}
\end{abstract}

\section{Introduction}
\label{sec:intro}
Controlling spectral norm of convolutional layers' Jacobians is a way to make models more robust to input perturbations \cite{singla2021improved, singla2019fantastic, cisse2017parseval}, increase generalization performance~\cite{neyshabur2018pac}, prevent explosion of the gradients during back propagation \cite{xiao2018dynamical}. In addition, bounds on spectral norm have been used to construct orthogonal convolutional layers \cite{meunier2022dynamical, singla2021skew}. 

Despite the advantages offered by controlling the spectral norm, its efficient computation for convolutional layers remains a challenging task. Finding the spectral norm is equivalent to finding the largest singular value of a matrix and the straightforward computation of the singular value decomposition  (SVD) is not feasible due to the size of the convolution Jacobian $T$.
Most of the other existing techniques rely on the input sizes of images, which can result in a significant computational load for high-resolution images or images with more than two spatial dimensions.

In this work, we derive a new accurate upper bound for the spectral norm of the Jacobian $\|T\|_2$.
For a kernel tensor $K\in\mathbb{R}^{c_{in}\times c_{out} \times h \times w}$,  this bound can be computed with $\mathcal{O}(c_{in}c_{out}  hw)$ operations, where $c_{in}, c_{out}$ are the number of input and output channels and  $h, w$ are the filter  sizes. 
Note that this complexity does not depend on the input or output image resolution. 
As a result, controlling the spectral norm during training comes at almost no additional cost.

To be more precise, our bound is based on a norm denoted as $\|K\|_{\sigma}$, which is induced by the kernel tensor $K$ as a multilinear functional.
Specifically, we show that $\|K\|_{\sigma} \leq \|T\|_2 \leq \sqrt{hw} \|K\|_{\sigma}$.
The tensor norm $\|K\|_{\sigma}$ is well-known in applied multilinear algebra and can be efficiently computed by a higher-order generalization of a power method (HOPM)~\cite{de1995higher}. We will refer to our tensor  norm upper bound $\sqrt{hw} \|K\|_{\sigma}$ as $TN$ and will use it for several new regularization strategies.
Compared to the existing approach~\cite{singla2019fantastic}, our bound is guaranteed to be more accurate.
It is also applicable for zero-padded convolutions and can be modified to account for strided convolutions, which are both widely used in convolutional architectures. 
Moreover, it straightforwardly generalizes to convolutions with more than 2 spatial dimensions, for which efficient estimation of spectral properties is much more time-consuming.

\section{Related work}
The work \cite{sedghi2018singular} was the first to develop an algorithm for the exact computation of singular values for the circular convolution. In this method, the authors obtain~$n^2$ matrices of the shape $c_{out} \times c_{in}$ after applying the Fourier transform to a zero-padded kernel. 
Then the SVD of $n^2$ matrices is computed. 
The complexity (see \Cref{table_complexity}) grows polynomially with $n$ and can be computationally demanding for datasets with higher resolutions. 
Recently \cite{ebrahimpour2023spectrum} extended the method of \cite{sedghi2018singular} for non-circular convolutions. 
The authors of \cite{senderovich2022towards} generalized the formula proposed in \cite{sedghi2018singular} to convolutions with more than two dimensions and arbitrary strides.

Several works \cite{ryu2019plug, farniageneralizable} adapted power iteration to approximate the spectral norm of the convolution map.
In this case, time complexity also depends on the input size~$n$. The authors of \cite{araujo2021lipschitz} proposed an upper bound for the singular value of a doubly Toeplitz matrix, which can be efficiently computed, yet may be not very accurate for all filter sizes (see  \cite{delattre2023efficient}). Works \cite{delattre2023efficient, delattre2024spectral} introduced the so-called Gram iteration with superlinear convergence as opposed to the power iteration. However, it is applied to a padded kernel and, hence, \cite{delattre2023efficient} has a complexity similar to \cite{sedghi2018singular}.
Both algorithms \cite{delattre2023efficient, delattre2024spectral} can also be memory consuming, for example, in the algorithm \cite{delattre2024spectral} the spatial size of the kernel increases almost twice every iteration.

A differentiable upper bound independent of input size was introduced in \cite{singla2019fantastic} for the spectral norm of circular convolutions. It is computed as the minimum  spectral norm of four unfoldings of convolution kernel multiplied by a constant factor $\sqrt{hw}$. 
Besides being a computationally efficient upper bound, it also provides new insights into the spectral normalization of GANs \cite{miyato2018spectral} and the regularization of CNNs from~\cite{yoshida2017spectral}.
Using the Toeplitz matrix theory, \cite{yi2020asymptotic} proved that the bound also holds in the non-circular case (zero-padded convolutions). 
In this paper, we provide a new upper bound that combines the benefits of \cite{singla2019fantastic}, while being both theoretically more precise and demonstrating noticeably higher accuracy in experiments. This new bound is based on the observation that the norm of a tensor can be bounded from above by the spectral norms of its unfoldings.

\begin{table}
\centering
\caption{\label{table_complexity} Comparison of existing methods for computing the spectral norm of a convolution Jacobian. ``Acc.'' stands for ``accuracy'', ''Mem.'' for memory, ``Diff.'' for ``differentiable'', ''Ind. of $n$'' for ``independence of $n$''. Accuracy and speed are measured experimentally. Our bound is differentiable, fast, accurate and independent of the input size $n$.}
\begin{tabular}{lcccccc}
\toprule
Method & Acc. & Mem.  &  Diff.  & Ind. of $n$   & Padding  & Complexity ($\mathcal{O}$)\\
\midrule
Power method \cite{ryu2019plug, farniageneralizable} & + & + & + & - & zero & $n^2c_{out}c_{in}hw$ \\
Sedghi \etal \cite{sedghi2018singular}& + & - & - & - &circular & \makecell{\ \ $n^2c_{out}c_{in}(\log{n}+c_{in})$} \\
F4 (Fantastic four) \cite{singla2019fantastic, yi2020asymptotic} & - & + & + & + & any & $c_{out}c_{in}hw$\\
LipBound \cite{araujo2021lipschitz}& - & + & + & + & zero & $c_{out}c_{in}hw$ 
\\
Gram iteration \cite{delattre2023efficient} & + & - & + & - & circular & \makecell{\ \ $n^2c_{out}c_{in}(\log{n}+c_{in})$} \\
 PowerQR \cite{ebrahimpour2023spectrum} & + & + & - & - & zero & $n^2c_{out}c_{in}hw$\\
Tensor Norm (ours) & + & + & + & + & any & $c_{out}c_{in}hw$ \\
\bottomrule
\end{tabular}
\end{table}
\section{Preliminaries and notation}

\subsection{Background on convolutions \label{notation}}

For simplicity, let us first consider a one-dimensional convolution of a vector $X \in \mathbb{R}^n$ and a kernel vector $K \in \mathbb{R}^{w}$. 
In this paper we consider two types of convolution -- with zero and circular paddings. Thanks to the linearity of convolution, it may be expressed as a matrix-vector product $Y=TX$. In the case of zero padding with an integer parameter $p\geq 0$, we have $Y_j = \sum_i K_{p+i} X_{j+i}$ and $T$ is a matrix with constant values along diagonals -- a Toeplitz matrix. In the case of circular padding, we have $Y_j = \sum_{i}^{} K_{\lfloor \frac{w}{2}\rfloor+i} X_{(j+i)\mod n}$ and $T$ is a circulant, which is a special case of a Toeplitz matrix, where each column is a cyclic permutation of a previous one.  
Below we give examples of $T$ for $n=4, w=3, p=1$:
\[
T=\begin{bmatrix}
K_{1} & K_{2} & 0 & 0\\
K_{0}& K_{1} & K_2 & 0\\
0&  K_{0} & K_{1} & K_2\\
0 &  0 & K_{0} & K_{1}\\
\end{bmatrix} 
\left({\let\scriptstyle\textstyle \substack{\text{zero} \\ \text{padding}}}
\right),
\quad
T=\begin{bmatrix}
K_1 & K_2 & 0 & K_{0}\\
K_{0}& K_{1} & K_2 & 0\\
0&  K_{0} & K_{1} & K_2\\
K_2 &  0 & K_{0} & K_{1}\\
\end{bmatrix}
\left({\let\scriptstyle\textstyle \substack{\text{circular} \\ \text{padding}}}
\right).
\]

Let us now consider multi-channel two-dimensional convolution with stride~$1$ for ease of presentation. Larger stride values will be discussed separately. Let the convolution operation be given by a kernel tensor $K \in \mathbb{R}^{c_{out} \times c_{in}\times h \times w}$ applied to an input $X \in \mathbb{R}^{c_{in}\times n \times n}$ resulting in an output $Y \in \mathbb{R}^{c_{out}\times n \times n}$. The Jacobian of the convolution can be represented as a matrix $T \in \mathbb{R}^{c_{out}n^2 \times c_{in}n^2}$ satisfying 
\begin{equation}\label{eq:toeplmat}
vec(Y)=T vec(X),
\end{equation}
where $vec(X)$ represents reshaping into a vector with $c_{in}n^2$ entries using colexicographic order (column-major). 

Similarly to the one-dimensional case, zero padding that preserves spatial size leads to a matrix $T$ that can be represented as a doubly block Toeplitz matrix. In more detail, each of its blocks $B_k$ is a block Toeplitz matrix with unstructured dense blocks $T_{k, l}=K_{:,:,k+h_1,l+w_1}\in \mathbb{R}^{c_{out}\times c_{in}}$.
$$T = \begin{bmatrix}
B_{0} &  \dots & B_{h_2} & 0 &\dots & 0\\
\vdots &  B_{0} & \ddots & \ddots & \ddots & \vdots \\
B_{-h_1} &  \ddots & \ddots & \ddots & \ddots & 0\\
0 &  \ddots & \ddots & \ddots & \ddots & B_{h_2}\\
\vdots &  \ddots & \ddots & \ddots & B_0 & \vdots\\
0 &  \hdots & 0 & B_{-h_1} & \hdots& B_{0}\\
\end{bmatrix}, B_k = \begin{bmatrix}
T_{k,0} &  \dots & T_{k, w_2} & 0 &\dots & 0\\
\vdots &  T_{k, 0} & \ddots & \ddots & \ddots & \vdots \\
T_{k,-w_1} &  \ddots & \ddots & \ddots & \ddots & 0\\
0 &  \ddots & \ddots & \ddots & \ddots & T_{k, w_2}\\
\vdots &  \ddots & \ddots & \ddots & T_{k, 0} & \vdots\\
0 &  \hdots & 0 & T_{k, -w_1} & \hdots& T_{k, 0}\\
\end{bmatrix}$$
where $h_1, h_2$ and $w_1, w_2$ subject to $h=h_1+h_2+1, w=w_1+w_2+1$ depend on the padding size in height and width.
Depending on the order of vectorization, $T$ may be either doubly block Toeplitz (as described above) or multi-block doubly Toeplitz. Nevertheless, singular values of the Jacobian remain the same, see, e.g. \cite[Lemma 1]{yi2020asymptotic}. 

Another way to maintain the input size is by using circular padding. In this case the layer ``wraps around'' and takes pixels from the opposite side of the image when the kernel gets beyond the edges. The corresponding Jacobian is a doubly block circulant matrix. It is a special case of a doubly block Toeplitz matrix with the entries satisfying $C_{k,l}=C_{(n-l)\!\!\mod n, \ (n-k)\!\!\mod n}$.

\subsection{Matrix and tensor norms}

Let us introduce several definitions from multilinear algebra. 
First, we define the Frobenius inner product for the $d$-dimensional tensors $A, B \in \mathbb{C}^{n_1 \times \dots \times n_d}$ and the Frobenius norm:
\[ \langle A, B\rangle_F = \sum_{i_1, \dots i_d} \overline{A}_{i_1, \dots i_d} B_{i_1, \dots i_d}, \quad 
    \|A\|_F = \sqrt{\langle A, A\rangle_F}.
\]
Given also vectors $u_i \in \mathbb{C}^{n_i}$, we introduce the following multilinear functional:
\begin{equation}\label{eq:squarebr}
    \llbracket A;u_1,u_2 \dots u_d \rrbracket = \sum_{i_1, \dots i_d} A_{i_1, \dots i_d}u_{1_{i_1}}u_{2_{i_2}} \dots u_{d_{i_d}}.
\end{equation}
Spectral (second) norm of a matrix $A \in \mathbb{C}^{n_1 \times n_2}$ can be defined using this notation~\cite{golub2013matrix}: 
\begin{equation} \label{eq:nrm2_mat}
    \|A\|_2 = \sup_{\substack{{u_i\in\mathbb{C}^{n_i}\colon}\|u_i\|_2=1 \\ i=1,2}} |u_1^TAu_2| =\sup_{\substack{{u_i\in\mathbb{C}^{n_i}\colon}\|u_i\|_2=1 \\ i=1,2}}  |\llbracket A; u_1, u_2\rrbracket|.
\end{equation}
It is also well-known that $\|A\|_2$ equals to the largest singular value $\sigma_1(A)$ and the vectors $u_1,u_2$ on which the equality is attained are respectively the left and right singular vectors of $A$.
Both the largest singular value and the respective singular vectors can be computed using, e.g., the power method.

Spectral norm~\eqref{eq:nrm2_mat} naturally extends from matrices to tensors with more than two dimensions~\cite{lim2005singular}. For a $d$-dimensional tensor $A \in \mathbb{C}^{n_1 \times \dots \times n_d}$, it is defined as a norm of a multilinear functional~\eqref{eq:squarebr}:
\begin{equation} \label{eq:tensnorm}
    \|A\|_{\sigma} = \sup_{{\substack{{u_i\in\mathbb{C}^{n_i}\colon}\|u_i\|_2=1 \\ i=1,\dots,d}}}|\llbracket A;u_1,u_2 \dots u_d \rrbracket|.
\end{equation}
We will further mean that the supremum above is taken over complex vectors $u_i$ even for real $A$, which may be different from supremum over real $u_i$, see~\cref{sup:sec:B} for details. We use the notation $\|A\|_2$ for matrices and $\|A\|_{\sigma}$ for $d$-dimensional tensors when $d>2$. 

Our main result will be formulated in terms of the norm~\eqref{eq:tensnorm}. We note that this expression also defines the largest singular value $\sigma_1(A)$ of a tensor $A$ and is associated with the best rank-1 approximation problem~\cite{lim2005singular}:
\begin{equation} \label{eq:als}
    \inf_{\substack{\sigma\in\mathbb{R}_+,\, \|u_i\|_2=1 \\ i=1,\dots,d}} \|A - \sigma\, u_1 \circ \dots \circ u_d\|_F, 
\end{equation}
which admits the solution $\sigma = \sigma_1(A)$ ($\circ$ denotes tensor product). As we will discuss in more detail later, there is an analog of the power method called HOPM (Algorithm \ref{alg:cap}) that can be used to solve the best rank-1 approximation problem and calculate the spectral norm of a tensor. 
\subsection{Tensor unfoldings}
We call a matrix 
 \[
    A_{(i_1\dots i_k; j_1 \dots j_{d-k})}\in\mathbb{R}^{\left(n_{i_1}\cdot\ldots \cdot n_{i_k}\right) \times \left(n_{i_k} \cdot \ldots \cdot n_{j_{d-k}}\right)}
 \]
 an unfolding of a tensor $A \in \mathbb{R}^{n_1 \times \dots \times n_d}$, if it is obtained by first permuting indices of the tensor and then by reshaping the tensor into a matrix in a column-major order. In a Python numpy-like notation it reads as:
 \[
 \begin{split}
    & A = A\texttt{.transpose}(i_1, \dots,i_k, j_1,\dots, j_{d-k}), \\
    & A_{(i_1, i_2 \dots i_k; j_1, \dots j_{d-k})} = A\texttt{.reshape}(n_{i_1}\dots n_{i_k}, n_{j_1}\dots n_{j_{d-k}},\ \texttt{order='f'}).
\end{split}
\]

The following lemma establishes a connection between the spectral norm of a tensor and the spectral norm of its unfoldings. We will later use this result to prove the lower bound on $\|T\|_2$ and to demonstrate how our result relates to~\cite{singla2019fantastic}.
\begin{lemma}[Prop. 4.1,~\cite{wang2017operator}]\label{pr:unfolding}  
$\|K\|_\sigma \leq \|R\|_2$ for any unfolding matrix $R$ of the tensor $K$. 
\end{lemma}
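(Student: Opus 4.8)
The plan is to exploit the variational characterizations of both norms: the tensor norm \eqref{eq:tensnorm} is a supremum of the multilinear functional \eqref{eq:squarebr} over rank-one collections of unit vectors, while the matrix norm \eqref{eq:nrm2_mat} is a supremum of a bilinear form over pairs of unit vectors. The idea is that every rank-one test tuple $(u_1,\dots,u_d)$ for $\|K\|_\sigma$ can be repackaged, via Kronecker products, into a single admissible pair of test vectors for $\|R\|_2$ achieving the same value of the functional. Since the matrix supremum ranges over a larger set of pairs (not only those of Kronecker-product form), the inequality will follow immediately.

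First I would fix an arbitrary unfolding $R = K_{(i_1\dots i_k;\, j_1\dots j_{d-k})}$, so that its rows are indexed by the tensor modes $i_1,\dots,i_k$ and its columns by the complementary modes $j_1,\dots,j_{d-k}$, flattened in the column-major order prescribed by the \texttt{reshape} in the definition. Given unit vectors $u_1,\dots,u_d$, I would form the Kronecker products $v = u_{i_1}\otimes\cdots\otimes u_{i_k}$ and $w = u_{j_1}\otimes\cdots\otimes u_{j_{d-k}}$, taken in exactly the same order used to flatten the row and column index groups. Because the Euclidean norm is multiplicative under Kronecker products, $\|v\|_2 = \prod_m \|u_{i_m}\|_2 = 1$ and likewise $\|w\|_2 = 1$, so $(v,w)$ is an admissible pair in \eqref{eq:nrm2_mat}.

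The central computation is to verify that $\llbracket K; u_1,\dots,u_d \rrbracket = v^{T} R w$. By construction the entry of $v$ at a flattened row multi-index $(i_1,\dots,i_k)$ is the product of the selected coordinates of $u_{i_1},\dots,u_{i_k}$, and similarly for $w$ at a column multi-index; moreover, since the unfolding merely permutes and reshapes the tensor, the matrix entry of $R$ at position $\bigl((i_1\dots i_k),(j_1\dots j_{d-k})\bigr)$ equals the tensor entry $K_{i_1\dots i_d}$ at the corresponding full multi-index. Summing these products over all row and column multi-indices therefore reproduces exactly the multilinear functional \eqref{eq:squarebr} evaluated at $u_1,\dots,u_d$. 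Consequently $|\llbracket K; u_1,\dots,u_d \rrbracket| = |v^{T} R w| \le \|R\|_2$ for every choice of unit vectors $u_1,\dots,u_d$, and taking the supremum over them yields $\|K\|_\sigma \le \|R\|_2$.

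The main obstacle is bookkeeping rather than conceptual: one must align the Kronecker-product ordering of $v$ and $w$ with the particular column-major flattening and index permutation used in the definition of the unfolding, so that the entry identification $R_{(\cdot),(\cdot)} = K_{i_1\dots i_d}$ holds exactly. I would also remark that no complex-conjugation subtlety arises, since both \eqref{eq:nrm2_mat} and \eqref{eq:tensnorm} are stated as suprema over all complex unit vectors with no conjugation of the test arguments; replacing any unit vector by its conjugate leaves the feasible set unchanged, so the two characterizations are compatible and the construction above transfers values verbatim.
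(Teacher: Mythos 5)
Your proof is correct: the identity $\llbracket K; u_1,\dots,u_d\rrbracket = v^{T}Rw$ with $v,w$ the appropriately ordered Kronecker products of the row-mode and column-mode vectors, combined with the observation that such separable pairs $(v,w)$ form a subset of all admissible unit-vector pairs in \eqref{eq:nrm2_mat}, immediately gives $\|K\|_\sigma \le \|R\|_2$, and your handling of the complex-vs-real and no-conjugation subtleties is sound since for a real matrix the bilinear supremum over complex unit vectors still equals $\sigma_1(R)$. The paper itself does not prove this lemma but only cites Prop.~4.1 of \cite{wang2017operator}; your argument is the standard one underlying that reference (suprema over coarser partitions dominate those over finer ones because Kronecker-structured test vectors are a special case of general ones), so it is essentially the same approach, now made self-contained.
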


\subsection{Spectral density matrix}
The approach we use to prove the main result of this paper is based on the following techniques. 
Following \cite[Theorem 1]{yi2020asymptotic}, for a doubly block Toeplitz matrix $T \in \mathbb{R}^{c_{out} n^2 \times c_{in} n^2}$ with the blocks $T_{k,l} \in \mathbb{R}^{c_{out}\times c_{in}}$, we consider a matrix-valued generating function $F\colon [-\pi, \pi]^2\to \mathbb{C}^{c_{out} \times c_{in}}$ such that 
$$T_{k, l} = \frac{1}{(2\pi)^2}\int_{-\pi}^{\pi}\int_{-\pi}^{\pi}F(\tau_1, \tau_2)e^{-i(k\tau_1+l\tau_2)} d\tau_1 d\tau_2.$$
The generating function can be explicitly written as~\cite{yi2020asymptotic}:
$$F(\tau_1, \tau_2) = \sum_{k=-h_1}^{h_2}\sum_{l=-w_1}^{w_2} T_{k, l} e^{i(k\tau_1+l\tau_2)},$$
which is a generalization of~\cite{tyrtyshnikov1996unifying,tilli1998singular}.
The function $F(\tau_1, \tau_2)$ is also called the spectral density matrix. The spectral norm of $F(\tau_1, \tau_2)$ is defined~\cite{yi2020asymptotic,tilli1998singular} as 
\[
    \|F\|_2 = \sup_{\tau_1, \tau_2 \in [-\pi, \pi]}\|F(\tau_1, \tau_2)\|_2.
\]

\begin{lemma}[Lemma 4, \cite{yi2020asymptotic}]\label{lemma:1}
$\|T\|_2 \leq \|F\|_2.$
\end{lemma}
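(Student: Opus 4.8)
The plan is to bound the sesquilinear form $y^{*}Tx$ directly by passing to the Fourier domain, where the doubly block Toeplitz structure of $T$ turns left multiplication by $T$ into pointwise multiplication by the symbol $F$. Since $\|T\|_2 = \sup_{\|x\|_2 = \|y\|_2 = 1}|y^{*}Tx|$, it suffices to establish $|y^{*}Tx| \le \|F\|_2\, \|x\|_2\, \|y\|_2$ for all $x \in \mathbb{C}^{c_{in}n^2}$ and $y \in \mathbb{C}^{c_{out}n^2}$.

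First I would index the coordinates of $x$ by a channel together with a spatial pair $(q_1,q_2)\in\{0,\dots,n-1\}^2$, writing $x_{q_1,q_2}\in\mathbb{C}^{c_{in}}$, and likewise $y_{p_1,p_2}\in\mathbb{C}^{c_{out}}$, and attach to each the trigonometric polynomial
\[
\hat{x}(\tau_1,\tau_2) = \sum_{q_1,q_2} x_{q_1,q_2}\, e^{-i(q_1\tau_1+q_2\tau_2)}, \qquad \hat{y}(\tau_1,\tau_2) = \sum_{p_1,p_2} y_{p_1,p_2}\, e^{-i(p_1\tau_1+p_2\tau_2)}.
\]
Since the block of $T$ in position $(p,q)$ depends only on the difference $q-p$ and equals $T_{q_1-p_1,\,q_2-p_2}$, substituting the integral representation $T_{k,l} = \frac{1}{(2\pi)^2}\int F(\tau)e^{-i(k\tau_1+l\tau_2)}d\tau$ into $y^{*}Tx = \sum_{p,q} y_p^{*}\,T_{q-p}\,x_q$ and collecting exponentials yields the key identity
\[
y^{*}Tx = \frac{1}{(2\pi)^2}\int_{-\pi}^{\pi}\!\!\int_{-\pi}^{\pi} \hat{y}(\tau_1,\tau_2)^{*}\, F(\tau_1,\tau_2)\, \hat{x}(\tau_1,\tau_2)\, d\tau_1\, d\tau_2.
\]

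Next I would bound the integrand pointwise by $\|\hat{y}(\tau)\|_2\,\|F(\tau)\|_2\,\|\hat{x}(\tau)\|_2 \le \|F\|_2\,\|\hat{y}(\tau)\|_2\,\|\hat{x}(\tau)\|_2$, using the definition of $\|F\|_2$ as the supremum of $\|F(\tau)\|_2$. Applying the Cauchy--Schwarz inequality to the resulting integral of $\|\hat{y}\|_2\,\|\hat{x}\|_2$ and then invoking Parseval's identity --- the exponentials being orthonormal for the normalized measure $(2\pi)^{-2}d\tau$, so that $\frac{1}{(2\pi)^2}\int \|\hat{x}\|_2^2 = \|x\|_2^2$ and likewise for $y$ --- gives $|y^{*}Tx| \le \|F\|_2\,\|x\|_2\,\|y\|_2$, which proves the claim. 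Conceptually, this is the statement that $T$ is the compression of the bounded multiplication operator $g\mapsto Fg$ on $L^2([-\pi,\pi]^2,\mathbb{C}^{c_{in}})$ to the finite-dimensional subspace of exponential polynomials, and compressions never increase the operator norm.

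The routine parts are tracking the indexing conventions of the doubly block Toeplitz structure (whether blocks depend on $q-p$ or $p-q$, and the chosen vectorization order) and the bookkeeping of normalization constants. The point requiring the most care is the passage from the pointwise bound to the global one: one must apply Cauchy--Schwarz at the level of the integral rather than before integrating, since $\|F(\tau)\|_2$ varies with $\tau$ and only its supremum $\|F\|_2$ is available uniformly. I expect no essential obstacle beyond verifying that the finite Fourier system is genuinely orthonormal, so that Parseval holds with equality and the resulting bound by $\|F\|_2$ is obtained with the sharp constant independent of $n$.
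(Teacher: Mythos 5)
Your proof is correct and follows essentially the same route as the paper's own argument (given in its appendix for the general $d$-dimensional case, following Lemma 4 of Yi): substitute the integral representation of the Toeplitz blocks via the symbol $F$, rewrite the bilinear form as $\frac{1}{(2\pi)^2}\int \hat{y}^{*}F\hat{x}\,d\tau$, bound pointwise by $\|F\|_2$, and finish with Cauchy--Schwarz plus Parseval. The only cosmetic difference is that the paper evaluates the form at the singular vectors of $T$ rather than bounding it over all unit vectors, which is an equivalent formulation.
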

 The set of singular values of circular convolution is obtained as ${\sigma_j(F(\tau_1, \tau_2))}$ with $(\tau_1, \tau_2) = (-\pi +\frac{2\pi j_1}{n}, -\pi +\frac{2\pi j_2}{n}), \forall j_1, j_2 \in [n]-1$ \cite{yi2020asymptotic}. However, by contrast to the circular case, for the zero-padded convolution the values of $\tau_1, \tau_2$ that lead to the singular values are not known a priori.

 \section{Main results}
 
The Lipschitz constant of a neural network $f$ with respect to the Euclidean norm for an input space $\mathcal{X}$ is defined as 
$$\Lip (f) = \sup_{x, x' \in \mathcal{X}, x\not=x'} \frac{\|f(x)-f(x')\|_2}{\|x-x'\|_2}.$$
It determines, e.g., how much the output value of the network changes relative to input perturbations. Computation of the Lipschitz constant of the whole network $f$ is a challenging task \cite{virmaux2018lipschitz}. However, assuming that $f$ is a composition of maps~$\phi_i$:
\[
    f = \phi_\ell \circ \dots \circ \phi_1,
\]
it can be bounded from above with the product of the Lipschitz constants of individual layers:
\begin{equation}\label{lip_const}
\Lip (f) \leq \prod_{i=1}^{\ell} \Lip(\phi_i).
\end{equation} 
Lipschitz constant of a linear layer $\phi(x) = Wx + b$ has the explicit representation:
\[
    \Lip (\phi) = \|W\|_2 = \sigma_1(W).
\]
In the case of a convolutional layer, we have $W =T$ from~\eqref{eq:toeplmat}.
Naive computation of its largest singular value by computing SVD appears to be intractable, as such a matrix does not even fit into GPU memory for practical sizes of layers and the computational complexity grows cubically with its size. 
Alternative iterative approaches, e.g., the power method that take the structure of $T$ into account require multiple applications of convolution, so their complexity depends on input image shape $n$.

In this work, we take benefit of the matrix structure and obtain an upper bound that is independent of $n$, valid for any padding and can be efficiently computed in practice. 
This bound is formulated in the next theorem.
\begin{theorem}\label{th:1}
Let $T \in \mathbb{R}^{c_{out} n^2 \times c_{in} n^2}$  be the Jacobian of a convolutional layer with stride $s=1$ and zero padding with the parameter $p\geq 0$ or circular padding. Let $K \in \mathbb{R}^{c_{out} \times c_{in} \times h \times w}$ be the convolution kernel. Then
$$\|K\|_{\sigma} \leq \|T\|_2 \leq  \sqrt{hw}\|K\|_{\sigma}.$$
\end{theorem}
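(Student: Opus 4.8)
The plan is to establish the two inequalities separately: the upper bound through the spectral density matrix $F$ together with \Cref{lemma:1}, and the lower bound through a single well-chosen unfolding of $K$ together with \Cref{pr:unfolding}.

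For the upper bound $\|T\|_2\le\sqrt{hw}\,\|K\|_\sigma$, I would begin from \Cref{lemma:1}, which reduces the task to bounding $\|F\|_2=\sup_{\tau_1,\tau_2}\|F(\tau_1,\tau_2)\|_2$. The key observation is that, after reindexing $p=k+h_1,\ q=l+w_1$ and factoring out the unimodular phase $e^{-i(h_1\tau_1+w_1\tau_2)}$ (which does not affect the spectral norm), $F(\tau_1,\tau_2)$ becomes the $c_{out}\times c_{in}$ matrix $M$ with entries $M_{a,b}=\sum_{p,q}K_{a,b,p,q}\,(u_3)_p(u_4)_q$, where $(u_3)_p=e^{ip\tau_1}$ and $(u_4)_q=e^{iq\tau_2}$. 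Writing its spectral norm through \eqref{eq:nrm2_mat} gives $\|M\|_2=\sup_{\|u_1\|_2=\|u_2\|_2=1}|\llbracket K;u_1,u_2,u_3,u_4\rrbracket|$ with $u_3,u_4$ fixed. Since every entry of $u_3,u_4$ is unimodular, $\|u_3\|_2=\sqrt h$ and $\|u_4\|_2=\sqrt w$; normalizing these two arguments and using multilinearity of $\llbracket\cdot\rrbracket$ extracts the factor $\sqrt{hw}$ and bounds the functional by $\sqrt{hw}\,\|K\|_\sigma$ via \eqref{eq:tensnorm}. This estimate is uniform in $\tau_1,\tau_2$, so taking the supremum and chaining with \Cref{lemma:1} yields the claim.

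For the lower bound $\|K\|_\sigma\le\|T\|_2$, I would combine \Cref{pr:unfolding} with the block-Toeplitz (resp. block-circulant) structure of $T$. Choosing the unfolding $R=K_{(1,3,4;\,2)}\in\mathbb{R}^{c_{out}hw\times c_{in}}$, \Cref{pr:unfolding} gives $\|K\|_\sigma\le\|R\|_2$, so it suffices to prove $\|R\|_2\le\|T\|_2$. This follows by identifying $R$ with a single block-column of $T$: fixing an interior input pixel (which exists whenever $n\ge\max(h,w)$), the associated $c_{in}$ columns of $T$ collect, over the $hw$ output positions that see the full kernel, exactly the blocks $T_{k,l}=K_{:,:,k+h_1,l+w_1}$, i.e. a row permutation of $R$ padded with zero rows. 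Restricting to a subset of columns cannot increase the spectral norm, and deleting zero rows leaves it unchanged, so $\|R\|_2\le\|T\|_2$; the same block-column is present in the circulant case.

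I expect the upper bound to be the main obstacle, and within it the conceptual step of recognizing that $\|F\|_2$ probes $K$ only along the one-parameter Fourier families $\{(e^{ip\tau_1})_p\}$ and $\{(e^{iq\tau_2})_q\}$, which are proper subsets of the unit spheres appearing in \eqref{eq:tensnorm} and whose members have norm $\sqrt h$ and $\sqrt w$ rather than $1$; this mismatch is precisely what produces the gap factor $\sqrt{hw}$. Carrying out the phase cancellation and the normalization cleanly is the crux. The lower bound is comparatively routine once the correct unfolding is matched to a block-column of $T$; the only technical care needed there is the boundary indexing ensuring an interior pixel reproduces all $hw$ kernel taps, which requires $n\ge\max(h,w)$.
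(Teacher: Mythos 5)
Your proof is correct and follows essentially the same route as the paper: the upper bound via the spectral density matrix, \Cref{lemma:1}, and normalization of the unimodular Fourier vectors to extract the factor $\sqrt{hw}$; the lower bound by identifying an unfolding of $K$ with a submatrix of $T$ and invoking \Cref{pr:unfolding}. The only immaterial difference is that you embed the transposed unfolding $K_{(134;2)}$ as a block column of $T$ (fixed interior input pixel), whereas the paper embeds $K_{(1;234)}$ as a block row (fixed interior output pixel); both arguments are dual and equally valid.
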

\begin{proof}
Firstly, let us prove the upper bound. We can rewrite $F(\tau_1, \tau_2)$ as a convolution of the kernel with vectors $z_1, z_2$:
$$F(\tau_1, \tau_2) =\sum_{k=-h_1}^{h_2}\sum_{l=-w_1}^{w_2} K_{:, :, h_1+k, w_1+l} e^{i(k\tau_1+l\tau_2)} =\llbracket K; I_{c_{out}}, I_{c_{in}}, z_1, z_2\rrbracket,$$
$$z_1 = [e^{-h_1(i\tau_1)}, e^{(-h_1+1)(i\tau_1)} \dots e^{h_2(i\tau_1)}], z_2 = [e^{-w_1(i\tau_2)}, e^{(-w_1+1)(i\tau_2)} \dots e^{w_2(i\tau_2)}]$$
and for the matrices in square brackets the summation is done along their last indices. 
For complex vectors $u_i$, we have,
\begin{equation}\label{eq:th1}\begin{aligned}&\|F\|_2 = \sup_{\substack{\|u_i\|_2=1 \\ i=1,2}} |u_1^TFu_2|= \sup_{\substack{\|u_i\|_2=1 \\ i=1,2}} \left|\llbracket F; u_1, u_2 \rrbracket\right| = \sup_{\substack{\|u_i\|_2=1 \\ i=1,2}} \left|\llbracket K; u_1, u_2, z_1, z_2 \rrbracket\right| = \\
& = \sup_{\substack{\|u_i\|_2=1 \\ i=1,2}} \left|\sqrt{hw}\, \left\llbracket K; u_1, u_2, \frac{z_1}{\sqrt{h}}, \frac{z_2}{\sqrt{w}} \right\rrbracket\right| \leq \sup_{\substack{\|u_i\|_2=1 \\ i=1,\dots,4}} \left|\sqrt{hw}\llbracket K; u_1, u_2, u_3, u_4 \rrbracket\right| =\\ 
&=\sqrt{hw}\, \|K\|_{\sigma}.
\end{aligned}\end{equation}
Thus, $\|T\|_2 \leq  \sqrt{hw}\|K\|_{\sigma}$. 

 Let us consider an unfolding $R=K_{(1, 234)}$. Note that it is a submatrix of a doubly Toeplitz matrix $T$ up to a permutation of columns. In particular, we can obtain $R$ by choosing a block row in $T$ which starts with $T_{-h_1,-w_1}$ and excluding zero entries from it, see \cref{notation}. Since the spectral norm of a matrix upper bounds spectral norm of any of its submatrices, we have
\[\|K\|_{\sigma} \leq \|R\|_2 = \|K_{(1, 234)}\|_2 \leq \|T\|_2,\]
which completes the proof.
\end{proof}

\begin{remark} \label{rm:hopm}

    Note that the matrix $F$ and the vectors $z_1, z_2$ from~\eqref{eq:th1} are complex. Therefore, all suprema in this context are taken over complex vectors $u_i$. As a result, we need a complex rank-1 approximation of the real kernel $K$ to find $\|K\|_{\sigma}$ when applying HOPM \cref{alg:cap}. This is because complex and real spectral norms of a real tensor do not always coincide~\cite{friedland2018nuclear, friedland2020spectral}. In \Cref{sup:sec:B} we provide an example of a tensor for which $\sqrt{hw}\|K\|_{\sigma}$ with supremum taken over real vectors does not bound from above the spectral norm of a convolution.
    
\end{remark}
\begin{remark}\label{pr:1}
    As a direct corollary of \cref{th:1}, the bounds are exact for $h=w = 1$, meaning that $\|T\|_2=\|K\|_\sigma$.
\end{remark}

Note that Theorem~\ref{th:1} also suggests that in the worst case, our $TN$ bound does not overestimate the exact singular value of the convolution Jacobian by more than $\sqrt{hw}$ times, i.e., $\sqrt{hw}\|K\|_{\sigma} \leq \sqrt{hw} \|T\|_2$.

Let us now discuss how our bound compares to~\cite{singla2019fantastic,yi2020asymptotic}. The work \cite{singla2019fantastic} proposed to bound the singular value of circular convolution using the minimum of spectral norms of four unfoldings of the kernel:
 \begin{equation}\label{eq:f4}
\|T\|_2 \leq \sqrt{hw}\min \left(\|K_{(13;24)}\|_2, \|K_{(14,23)}\|_2, \|K_{(1;234)}\|_2, \|K_{(2;134)}\|_2 \right). \end{equation}
\Cref{pr:unfolding} shows that the tensor spectral norm never exceeds the norm of any of its unfoldings, including the ones that are not present in~\eqref{eq:f4}. Therefore, our bound is provably more accurate than the one in \cite{singla2019fantastic}.

\cref{th:1} naturally extends to convolutions with any number of dimensions. 
\begin{theorem}\label{th:higher_dim} 
Let us consider convolution with a kernel $K \in \mathbb{R}^{c_{out}\times c_{in} \times h_1 \times \ldots \times h_d}$, $d \in \mathbb{N}_+$, with arbitrary padding and stride $s=1$. 
Then
$$\|K\|_{\sigma} \leq \|T\|_2 \leq \sqrt{h_1 \dots h_d} \|K\|_{\sigma}.$$
\end{theorem}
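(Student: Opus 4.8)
The plan is to transcribe the two-part proof of \cref{th:1}, replacing the two-variable spectral density matrix by its $d$-variable analogue and the two exponential vectors $z_1, z_2$ by $d$ such vectors of lengths $h_1, \dots, h_d$.

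For the upper bound, the Jacobian $T$ of a stride-$1$ convolution with arbitrary padding is now a $d$-level block Toeplitz matrix whose dense $c_{out}\times c_{in}$ blocks are indexed by a spatial multi-index. To it I associate the matrix-valued generating function
$$F(\tau_1, \dots, \tau_d) = \sum_{k_1, \dots, k_d} K_{:,:,k_1,\dots,k_d}\, e^{i(k_1\tau_1 + \dots + k_d\tau_d)} = \llbracket K; I_{c_{out}}, I_{c_{in}}, z_1, \dots, z_d \rrbracket,$$
where the indices $k_j$ are centered about the kernel and each $z_j$ has the same complex-exponential structure as $z_1, z_2$ in \eqref{eq:th1}, so that $\|z_j\|_2 = \sqrt{h_j}$. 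The one ingredient that is genuinely new is a multilevel version of \cref{lemma:1}, i.e. $\|T\|_2 \leq \|F\|_2 = \sup_\tau \|F(\tau)\|_2$; this is the $d$-level Toeplitz generalization of \cite{tilli1998singular, tyrtyshnikov1996unifying} underlying \cite[Lemma 4]{yi2020asymptotic}. Granting it, I would reproduce the chain \eqref{eq:th1}: write $\|F\|_2 = \sup_{\|u_1\|_2=\|u_2\|_2=1} |\llbracket K; u_1, u_2, z_1, \dots, z_d\rrbracket|$, pull out the factor $\sqrt{h_1\cdots h_d}$ by rescaling each $z_j$ to the unit vector $z_j/\sqrt{h_j}$, and finally bound the supremum from above by letting the last $d$ arguments range over all unit vectors. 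This yields $\|T\|_2 \leq \sqrt{h_1\cdots h_d}\,\|K\|_\sigma$.

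The lower bound carries over verbatim. I would take the unfolding $R = K_{(1;\,2\,3\cdots(d+2))}$ separating the output-channel index from all the remaining indices. Exactly as in \cref{th:1}, $R$ coincides, up to a permutation of its columns, with the submatrix of $T$ formed by a block row starting at the corner block and deleting the zero columns due to padding. Since the spectral norm dominates that of any submatrix and \cref{pr:unfolding} gives $\|K\|_\sigma \leq \|R\|_2$, we obtain $\|K\|_\sigma \leq \|R\|_2 \leq \|T\|_2$.

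The main obstacle is the multilevel analogue of \cref{lemma:1}: in the $d=2$ case this inequality was imported wholesale from \cite{yi2020asymptotic}, so here one must verify that the Toeplitz-symbol estimate ``the spectral norm of a multilevel block Toeplitz matrix is bounded by the supremum of the norm of its symbol'' persists at $d$ levels. I expect this to be classical and essentially routine rather than deep, but it is the single step that is not a pure rewriting of the two-dimensional argument, and it is where some bookkeeping with the per-dimension padding offsets is required.
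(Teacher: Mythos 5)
Your proposal follows the paper's own proof almost step for step: the identical unfolding $R=K_{(1;\,23\cdots(d+2))}$ viewed as a submatrix of the multi-level Toeplitz matrix for the lower bound, and the identical rescaling $z_j \mapsto z_j/\sqrt{h_j}$ inside the generating function for the upper bound. The one step you leave ``granted''---the multilevel analogue of \cref{lemma:1}---is precisely the step the paper spends the bulk of its proof on, and it resolves it exactly the way you predicted: by transcribing the $d=2$ argument of \cite[Lemma 4]{yi2020asymptotic} and \cite[Section VI.B]{yi2020asymptotic} to $d$ levels. Concretely, one writes $T=\sum_{k} \left[J^{k_1}_{n_1}\otimes\cdots\otimes J^{k_d}_{n_d}\right]\otimes T_k$, where $T_k$ are the Fourier coefficients of $F(\tau)$ over $\Omega=[-\pi,\pi]^d$, takes the top singular vectors $u,v$ of $T$, splits them into $n^d$ subvectors $u_k, v_m$, and forms the vector-valued trigonometric polynomials $u(\tau)=\sum_k u_k e^{-i\langle k,\tau\rangle}$ and $v(\tau)=\sum_m v_m e^{i\langle m,\tau\rangle}$; then
\begin{equation*}
\|T\|_2 = u^T T v = \frac{1}{(2\pi)^d}\int_{\Omega} u(\tau)^T F(\tau)\, v(\tau)\, d\tau
\le \|F\|_2\,\frac{1}{(2\pi)^d}\sqrt{\int_\Omega \|u(\tau)\|_2^2\, d\tau}\,\sqrt{\int_\Omega \|v(\tau)\|_2^2\, d\tau} = \|F\|_2,
\end{equation*}
using the pointwise bound $\|F(\tau)\|_2\le\|F\|_2$, Cauchy--Schwarz, and Parseval's identity. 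Nothing in this chain is specific to $d=2$, and the per-dimension padding offsets enter only through the index ranges of the sums, so the bookkeeping you flagged is indeed harmless. In short, your plan is structurally the paper's proof; the lemma you deferred is the only portion requiring actual work, and that work is the routine generalization you anticipated rather than a new idea.
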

\begin{proof}
    The proof is provided in \Cref{sup:sec:D}.
\end{proof}
The following theorem extends \cref{th:1} to the case of strided convolutions.
\begin{theorem}\label{th:2}
Let $T_s \in \mathbb{R}^{c_{out}\frac{n^2}{s^2} \times c_{in} n^2}$  be the Jacobian of a convolution with the kernel $K \in \mathbb{R}^{c_{out} \times c_{in} \times h \times w}$ and stride $s$. Then
$$\|Q\|_\sigma \leq \|T_s\|_2 \leq  \sqrt{\left\lceil\frac{h}{s} \right\rceil \left\lceil\frac{w}{s} \right\rceil}\|Q\|_{\sigma},$$
where $Q \in \mathbb{R}^{ c_{out}\times c_{in} s^2 \times \lceil\frac{h}{s} \rceil \times \lceil\frac{w}{s} \rceil}$ is padded with zeros and reshaped kernel $K$
$$K_{c, d, a, b} = Q_{c, ds^2 + s(a\Mod s)+b\Mod s, \lfloor\frac{a}{s}\rfloor, \lfloor\frac{b}{s}\rfloor}.$$
\end{theorem}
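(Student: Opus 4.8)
The plan is to reduce the strided convolution to an equivalent stride-$1$ convolution by a ``space-to-depth'' rearrangement of the input, and then to invoke \cref{th:1} applied to the reshaped kernel $Q$. The reshape formula in the statement is exactly the kernel counterpart of this rearrangement, so the whole argument splits into (i) verifying that the two convolutions produce the same output up to a permutation of entries, and (ii) checking that permutations leave the spectral norm unchanged.

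First I would introduce the input rearrangement that mirrors the kernel reshape. For $\rho,\tau\in\{0,\dots,s-1\}$ define $\tilde X_{ds^2+s\rho+\tau,\,i',\,j'} = X_{d,\,si'+\rho,\,sj'+\tau}$, which groups each non-overlapping $s\times s$ spatial block of $X$ into the channel dimension and maps $X\in\mathbb{R}^{c_{in}\times n\times n}$ bijectively onto $\tilde X\in\mathbb{R}^{c_{in}s^2\times \frac ns\times\frac ns}$ (this requires $s\mid n$, which is implicit in the integer shape of $T_s$). Writing $a=s\lfloor a/s\rfloor + (a\bmod s)$ and $b=s\lfloor b/s\rfloor + (b\bmod s)$ in the strided convolution $Y_{c,i,j}=\sum_{d,a,b}K_{c,d,a,b}\,X_{d,\,si+a,\,sj+b}$ and substituting the kernel reshape $K_{c,d,a,b}=Q_{c,\,ds^2+s(a\bmod s)+(b\bmod s),\,\lfloor a/s\rfloor,\,\lfloor b/s\rfloor}$, the shift index collapses to $si+a=s(i+\lfloor a/s\rfloor)+(a\bmod s)$, and likewise in the other spatial direction. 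Collecting $e=ds^2+s(a\bmod s)+(b\bmod s)$ as the new input-channel index then yields $Y_{c,i,j}=\sum_{e,\alpha,\beta}Q_{c,e,\alpha,\beta}\,\tilde X_{e,\,i+\alpha,\,j+\beta}$, i.e.\ exactly the stride-$1$ convolution of $\tilde X$ with $Q$.

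Having established this identity, let $T_1^{(Q)}$ denote the Jacobian of the stride-$1$ convolution with kernel $Q$, and let $P$ be the permutation matrix with $vec(\tilde X)=P\,vec(X)$; then $vec(Y)=T_1^{(Q)}P\,vec(X)$, so $T_s=T_1^{(Q)}P$. Since $P$ is orthogonal, right-multiplication by a permutation preserves singular values and hence $\|T_s\|_2=\|T_1^{(Q)}\|_2$. The kernel $Q$ has spatial sizes $\lceil h/s\rceil\times\lceil w/s\rceil$, so applying \cref{th:1} to $T_1^{(Q)}$ gives $\|Q\|_\sigma\le\|T_1^{(Q)}\|_2\le\sqrt{\lceil h/s\rceil\lceil w/s\rceil}\,\|Q\|_\sigma$, and combining with $\|T_s\|_2=\|T_1^{(Q)}\|_2$ finishes the proof.

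The main obstacle is the bookkeeping in step (i): one must confirm that the zero padding of the original strided convolution, together with the zero padding of $K$ up to spatial sizes $s\lceil h/s\rceil\times s\lceil w/s\rceil$ that defines $Q$, is transported by the space-to-depth map into a genuine zero- (or circular-) padded stride-$1$ convolution of $\tilde X$, so that \cref{th:1} applies verbatim to $T_1^{(Q)}$. The index substitutions themselves are routine; the care is entirely in the ranges of summation and in matching the padding on both sides.
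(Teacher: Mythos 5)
Your proposal is correct and takes essentially the same route as the paper: both proofs reduce the strided convolution to a stride-$1$ convolution with the reshaped kernel $Q$ and then invoke \cref{th:1}, the paper performing the reduction by Kronecker-product manipulation of sliced ``strided Toeplitz'' matrices while you perform it by an explicit space-to-depth permutation of the input --- two presentations of the same identity $T_s = T_1^{(Q)}P$, with permutation invariance of singular values absorbing the reordering. The padding and alignment bookkeeping you flag as the remaining obstacle is treated at the same informal level in the paper's own proof, so your argument matches it in both substance and rigor.
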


\begin{proof}
    The proof is provided in \Cref{sup:sec:A}.
\end{proof}

\section{Computation of the spectral norm}

The power method is a standard algorithm for the computation of the largest singular value and the respective singular vector of a matrix. Works \cite{de2000best, de1995higher} extended the power method from matrices to tensors of higher dimension. 
The algorithm starts with either randomly initialized vectors $u_1, u_2, \dots, u_d$, or it can start with a good approximation to the singular vectors.
For example, in our case, the kernel weights change slowly during training, so we may utilize vectors from previous iterations.
The $i$-th substep, $i=1,\dots,d$ of an iteration is equivalent to the minimization of the functional from~\eqref{eq:als} with respect to a single $u_i$. This leads to simple formulas with contractions, see \Cref{alg:cap}.
Note that according to~\Cref{rm:hopm}, we need to utilize complex vectors $u_i$, which is why we have the complex conjugate operation $\conj(u_i)$.
The operations in HOPM can be reorganized to speed up computations for the certain hardware. In our repository with code, you can find several implementations of HOPM that have proven to be faster in practice when running on either CPU or GPU.
It is also advantageous to rerun the HOPM algorithm from the beginning several times if no good initial guess is available. This will help ensure convergence to the global optimum.
Other initialization strategies also exist, see~\cite{de2000best}.

\begin{algorithm}
\caption{Higher-Order Power Method (HOPM)}\label{alg:cap}
\begin{algorithmic}
\State \textbf{Input} kernel: $K \in \mathbb{R}^{c_{out}\times c_{in}\times h \times w}$; number of iterations: $\texttt{n\_iters}$; initial unit vectors: $u_1 \in \mathbb{C}^{c_{out}}$, $u_2 \in \mathbb{C}^{c_{in}}$, $u_3 \in \mathbb{C}^{h}$, $u_4 \in \mathbb{C}^{w}$.
\State \textbf{Return} $\|K\|_{\sigma}$

\For{$1 \dots \texttt{n\_iters}$}
    \State $u_1=\llbracket K; I, u_2, u_3, u_4 \rrbracket, \, u_1 = \conj(u_1)/\|u_1\|$
    \Comment{$\mathcal{O}(c_{out}c_{in}hw)$}
    \State $u_2=\llbracket K; u_1, I, u_3, u_4 \rrbracket, \, u_2 = \conj(u_2)/\|u_2\|$
    \Comment{$\mathcal{O}(c_{out}c_{in}hw)$}
    \State $u_3=\llbracket K; u_1, u_2, I, u_4 \rrbracket, \, u_3 = \conj(u_3)/\|u_3\|$
    \Comment{$\mathcal{O}(c_{out}c_{in}hw)$}
    \State $u_4=\llbracket K; u_1, u_2, u_3, I \rrbracket, \, u_4 = \conj(u_4)/\|u_4\|$
    \Comment{$\mathcal{O}(c_{out}c_{in}hw)$}
\EndFor
\State \textbf{return} $|\llbracket K; u_1, u_2, u_3, u_4 \rrbracket|$
\Comment{$\mathcal{O}(c_{out}c_{in}hw)$}
\end{algorithmic}
\end{algorithm}

Note that, although the algorithm involves complex vectors, the norm $\|K\|_{\sigma}$ itself is a real number.
The following proposition provides explicit formulas for the norm and its gradient, which avoid complex arithmetic calculations.
\begin{proposition}
 Let $u_1, u_2, u_3, u_4$ be the complex singular vectors corresponding to $\|K\|_{\sigma}$. Let $u_j=a_j+ib_j$, where $a_j$ and $b_j$ are the real and imaginary parts. The gradient of the bound $\sqrt{hw}\|K\|_{\sigma}$ with respect to the kernel is computed as
\[\nabla_K \sqrt{hw}\|K\|_\sigma=\sqrt{hw} \nabla_K  \sqrt{real^2+im^2} = \frac{\sqrt{hw}}{\|K\|_{\sigma}} \left(real \nabla_K real + im \nabla_K  im\right),  \]
where $real$ and $im$ are the real and imaginary parts of $\llbracket K; u_1, u_2, u_3, u_4\rrbracket$.
For $part\in\{real,\, im\}$, we can write it and its gradient in terms of  predefined tensors $P_{real}, P_{im} \in \{-1, 0, 1\}^{2\times2\times2\times2}$ (see  \Cref{sup:sec:C} for their explicit representation):
\begin{equation} \label{eq:grad}
\begin{split}
    &part = \langle P_{part}, \llbracket K;\, [a_1, b_1], [a_2, b_2], [a_3, b_3], [a_4, b_4] \rrbracket \rangle_F, \\
    &\nabla_K part = \left\llbracket P_{part}; [a_1, b_1]^T, [a_2, b_2]^T, [a_3, b_3]^T, [a_4, b_4]^T \right\rrbracket.
    \end{split}
\end{equation}
\end{proposition}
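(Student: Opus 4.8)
The plan is to reduce the statement to two independent ingredients: an \emph{envelope}-type argument that lets us differentiate $\|K\|_\sigma$ while treating the singular vectors $u_1,\dots,u_4$ as frozen, and then a purely algebraic expansion of the complex multilinear functional into its real and imaginary parts. The three displayed formulas in \eqref{eq:grad} are then obtained by the chain rule and by linearity of $\llbracket K;\,\cdot\,\rrbracket$ in $K$.

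First I would record that, by definition of the singular vectors, $\|K\|_\sigma = |\llbracket K; u_1,u_2,u_3,u_4\rrbracket|$ and that $(u_1,\dots,u_4)$ maximizes $|\llbracket K; v_1,\dots,v_4\rrbracket|$ over unit vectors $v_i$. Although each $u_j$ depends on $K$, the first-order optimality condition on the product of spheres says that the gradient of the objective in each $v_j$-direction is parallel to $u_j$, while the induced perturbation $d u_j$ remains tangent to the sphere, i.e.\ orthogonal to $u_j$; hence every term $\partial(\cdot)/\partial u_j \cdot du_j/dK$ cancels, and $\nabla_K\|K\|_\sigma$ coincides with the partial gradient of $|\llbracket K; u_1,\dots,u_4\rrbracket|$ computed with the $u_j$ held fixed. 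I would also note the phase ambiguity $u_1\mapsto e^{i\theta}u_1$, which leaves both $\|K\|_\sigma$ and the final gradient formula invariant, so fixing any representative is harmless. I expect this envelope step to be the main obstacle, since it is precisely what justifies the otherwise-naive ``detach the singular vectors'' computation and relies on stationarity on the constraint manifold together with differentiability of $\|K\|_\sigma$ at a simple maximizer. Granting it, the first displayed identity is just the chain rule applied to $\|K\|_\sigma=\sqrt{real^2+im^2}$, yielding $\nabla_K\|K\|_\sigma=\tfrac{1}{\|K\|_\sigma}(real\,\nabla_K real + im\,\nabla_K im)$ and the prefactor $\sqrt{hw}$.

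It then remains to produce the closed forms for $real$, $im$ and their gradients with the $u_j$ frozen. Writing $u_j=a_j+ib_j$ and expanding $\prod_{j}u_{j}$, I would group the $2^4$ resulting terms by the subset $s\in\{0,1\}^4$ of modes from which the imaginary part is selected; such a term carries the scalar $i^{|s|}$, with $|s|=\sum_j s_j$, and the factor obtained by contracting $K$ against the columns $[a_j,b_j]_{s_j}$. Since $i^{|s|}$ is real when $|s|$ is even and imaginary when $|s|$ is odd, setting $(P_{real})_s=\mathrm{Re}\,i^{|s|}$ and $(P_{im})_s=\mathrm{Im}\,i^{|s|}$ produces tensors with entries in $\{-1,0,1\}$ and gives $part=\langle P_{part}, \llbracket K; [a_1,b_1],\dots,[a_4,b_4]\rrbracket\rangle_F$, where $[a_j,b_j]$ denotes the $n_j\times 2$ matrix whose two columns are $a_j$ and $b_j$. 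These are exactly the $P_{real},P_{im}$ tabulated in \Cref{sup:sec:C}.

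Finally, because $\llbracket K; [a_1,b_1],\dots,[a_4,b_4]\rrbracket$ is linear in $K$, its $(i,j,k,l)$-th partial derivative is the outer product $\prod_j [a_j,b_j]_{s_j}$, so differentiating $part=\langle P_{part},\,\cdot\,\rangle_F$ and summing over $s$ against $P_{part}$ reassembles precisely the contraction $\nabla_K part=\llbracket P_{part}; [a_1,b_1]^T,\dots,[a_4,b_4]^T\rrbracket$, where transposing each $[a_j,b_j]$ aligns its $\{0,1\}$-index with the corresponding mode of $P_{part}$. Substituting these two gradients into the chain-rule expression from the second paragraph yields the claimed formula for $\nabla_K\sqrt{hw}\|K\|_\sigma$, completing the proof.
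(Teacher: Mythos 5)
Your proof is correct and, where the paper has content to compare against, it follows the same route. Note that the paper never actually proves this proposition: the main text just states the formulas, and the appendix on gradient computation only defines $P_{real}$ and $P_{im}$ as the real and imaginary parts of $(e_1+ie_2)^{\otimes 4}$ and remarks that they can be precomputed. Your expansion of $\prod_j (a_j+ib_j)$ over the subsets $s\in\{0,1\}^4$ of modes contributing a factor $i$, with coefficient $i^{|s|}$, reproduces exactly those tabulated tensors, since the entry of $(e_1+ie_2)^{\otimes 4}$ at position $s$ is precisely $i^{|s|}$; and your linearity argument giving $\nabla_K\, part$ as the reassembled contraction against $P_{part}$ is the intended derivation (your column-versus-row convention for $[a_j,b_j]$ differs from the paper's by a transposition, but is internally consistent). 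The one genuine addition on your side is the envelope/Danskin step: the paper, like the spectral-normalization literature it builds on, silently treats the singular vectors $u_j$ as constants when differentiating. Your justification — at a maximizer the gradient of $|\llbracket K;u_1,u_2,u_3,u_4\rrbracket|$ in each $u_j$ is normal to the unit sphere while $du_j/dK$ is tangent to it, so the cross terms vanish and only the partial gradient in $K$ survives — is the right way to make this rigorous, and your caveats (differentiability at a maximizer that is simple up to phase symmetry, under which $real\,\nabla_K real + im\,\nabla_K im$ is invariant) identify exactly the implicit hypotheses under which the proposition should be read.
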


\section{Experiments}\label{s:6}
The source code is available at \url{https://github.com/GrishKate/conv_norm}.
\subsection{Spectral norm computation}
\Cref{table_tightness} compares our $TN$ bound with the closest competitor, ``fantastic four'' bound \cite{singla2019fantastic} ($F4$ for short), in terms of precision. Values in each of the kernels are sampled from $\mathcal{N}(0, 1)$. We observe that $TN$ bound produces close to exact values, whereas $F4$ exceeds exact spectral norm 1.7-2.6 times. Results for strided convolutions are presented in \Cref{sup:sec:stride}.

\Cref{table_resnet} shows that the $TN$ bound provides an accurate approximation for the spectral norm of convolutional layers for the pre-trained ResNet18. Our bound performs systematically better for most layers compared to the $F4$ bound.

\Cref{sup:fig:all_methods} in Appendix demonstrates comparison with other existing methods in terms of precision, memory consumption and time performance. Although methods \cite{delattre2023efficient, sedghi2018singular} obtain a tight upper bound, they require significantly more memory than the other approaches. Algorithms based on power iteration \cite{ryu2019plug, ebrahimpour2023spectrum} are highly accurate, but their time and  memory consumption grows with the input size $n$. While the bounds \cite{singla2019fantastic, araujo2021lipschitz} are independent of $n$, they are noticeably less accurate. Our $TN$ bound does not depend on $n$ and therefore is both memory efficient and fast to compute, while being more precise than \cite{singla2019fantastic, araujo2021lipschitz}.

\begin{table}
\centering
\caption{\label{table_tightness} Comparison of the $TN$ bound and the $F4$ bound for Gaussian kernels. Reference values for spectral norm of zero-padded convolution $\|T\|_2$ were computed with power method \cite{ryu2019plug} for the image size $32 \times 32$. We used 100 iterations for all methods, stride equals to 1. }
\begin{tabular}{ l|ccc|cc}
\toprule
Kernel size  & \makecell{$\|T\|_2$ \\ (Reference)} & \makecell{$F4$  } & \makecell{$TN$\\ (Ours)} & $\dfrac{F4}{\|T\|_2}$ & $\dfrac{TN}{\|T\|_2}$ \\
\midrule
64, 64, 3, 3 & 49.35 & 81.26 & 51.51 & 1.646 & \textbf{1.044} \\
128, 128, 3, 3 & 67.14 & 114.68 & 69.99 & 1.708 & \textbf{1.042} \\
256, 256, 3, 3 & 95.71 & 164.44 & 96.47 & 1.718 & \textbf{1.008} \\
512, 512, 3, 3 & 135.62 & 234.55 & 136.99 & 1.729 & \textbf{1.01} \\
64, 64, 5, 5 & 81.55 & 175.74 & 88.25 & 2.155 & \textbf{1.082} \\
128, 128, 5, 5 & 113.5 & 250.91 & 119.29 & 2.211 & \textbf{1.051} \\
256, 256, 5, 5 & 160.07 & 354.46 & 165.37 & 2.214 & \textbf{1.033} \\
512, 512, 5, 5 & 227.1 & 501.8 & 229.66 & 2.21 & \textbf{1.011} \\
64, 64, 7, 7 & 112.94 & 293.34 & 127.69 & 2.597 & \textbf{1.131} \\
128, 128, 7, 7 & 158.36 & 415.81 & 170.99 & 2.626 & \textbf{1.08} \\
256, 256, 7, 7 & 222.45 & 587.98 & 235.28 & 2.643 & \textbf{1.058} \\
512, 512, 7, 7 & 315.25 & 834.35 & 326.2 & 2.647 & \textbf{1.035} \\
\bottomrule
\end{tabular}
\end{table}

\begin{table}
\centering
\caption{\label{table_resnet} Comparison of the $TN$ bound and the $F4$ bound for kernels from ResNet18 pretrained on ImageNet. We use 100 iterations for all methods. For layers with $h=w=1$ both bounds give an exact value, as proved in \Cref{pr:1}. The $F4$ bound does not take the stride into account, which results in even looser bound for strided convolution (see the first row, where the ratio $F4 / \|T\|_2 =3.5$).}
\begin{tabular}{lc|ccc|cc}
\toprule
Kernel size & Stride & \makecell{$\|T\|_2$ \\ (Exact)}  & \makecell{$F4$ } &  \makecell{$TN$\\ (Ours)}  & $\dfrac{F4}{\|T\|_2}$& $\dfrac{TN}{\|T\|_2}$\\
\midrule
64, 3, 7, 7 & 2 & 8.2 & 28.89 & 14.91 & 3.526 & \textbf{1.819} \\
64, 64, 3, 3 & 1 & 5.99 & 9.33 & 8.56 & 1.558 & \textbf{1.43} \\
64, 64, 3, 3 & 1 & 5.3 & 6.3 & 5.36 & 1.187 & \textbf{1.01} \\
64, 64, 3, 3 & 1 & 6.95 & 8.71 & 8.63 & 1.253 & \textbf{1.242} \\
64, 64, 3, 3 & 1 & 3.8 & 5.4 & 4.04 & 1.42 & \textbf{1.062} \\
128, 64, 3, 3 & 2 & 2.8 & 6.01 & 3.16 & 2.149 & \textbf{1.129} \\
128, 128, 3, 3 & 1 & 5.69 & 7.21 & 6.67 & 1.267 & \textbf{1.173} \\
128, 64, 1, 1 & 2 & 1.91 & 1.91 & 1.91 & 1.0 & \textbf{1.0} \\
128, 128, 3, 3 & 1 & 4.38 & 6.78 & 5.46 & 1.549 & \textbf{1.247} \\
128, 128, 3, 3 & 1 & 4.86 & 7.57 & 6.07 & 1.558 & \textbf{1.248} \\
256, 128, 3, 3 & 2 & 3.95 & 8.45 & 4.32 & 2.14 & \textbf{1.093} \\
256, 256, 3, 3 & 1 & 6.45 & 8.04 & 7.07 & 1.246 & \textbf{1.096} \\
256, 128, 1, 1 & 2 & 1.22 & 1.22 & 1.22 & 1.0 & \textbf{1.0} \\
256, 256, 3, 3 & 1 & 6.23 & 7.57 & 6.43 & 1.216 & \textbf{1.033} \\
256, 256, 3, 3 & 1 & 7.53 & 9.17 & 8.25 & 1.218 & \textbf{1.095} \\
512, 256, 3, 3 & 2 & 5.53 & 11.0 & 5.99 & 1.989 & \textbf{1.082} \\
512, 512, 3, 3 & 1 & 8.38 & 10.45 & 9.54 & 1.247 & \textbf{1.139} \\
512, 256, 1, 1 & 2 & 2.05 & 2.05 & 2.05 & 1.0 & \textbf{1.0} \\
512, 512, 3, 3 & 1 & 16.26 & 18.37 & 17.95 & 1.13 & \textbf{1.104} \\
512, 512, 3, 3 & 1 & 6.8 & 7.6 & 7.52 & 1.117 & \textbf{1.106} \\
\bottomrule
\end{tabular}
\end{table}

\subsection{Spectral norm regularization}
Following \cite{singla2019fantastic}, we apply the $TN$ bound for regularization and study its effect on image classification accuracy. We train ResNet18 on CIFAR100 and ResNet34 on ImageNet for 90 epochs. We use weight decay 1$e$-4, SGD with momentum 0.9 and initial learning rate of 0.1, which is reduced 10 times every 30 epochs. We apply the sum of estimates of the spectral norms of all convolutional layers of the network as a regularizer. The objective loss function becomes:
\[
\mathcal{L}= \mathcal{L}_{train} + \beta \sum_{i}\sigma_{i},
\]
where $\beta$ denotes the regularization coefficient, $\mathcal{L}_{train}$ is cross-entropy loss, $\sigma_i$ denotes the bound on the largest singular value of the $i^{th}$ convolutional layer computed with different methods.  During the epoch, one iteration step is sufficient to update $\sigma_i$ and the singular vectors. However, for reliability, we recompute singular vectors every epoch from random initialization. We obtain the highest accuracy with $\beta=0.0022$ for CIFAR100 and $\beta=1e$-4 for ImageNet. The results of hyperparameter tuning are presented in \Cref{sup:table_hyperparameters} in Appendix. We compare the effect of regularization both with and without weight decay for CIFAR100. \Cref{table_regularization} suggests that $TN$ reduces generalization error and gives systematically better results than $F4$.

\begin{table}
\caption{\label{table_regularization} Test accuracy with different regularizers.}
\begin{subtable}{.5\textwidth}
    \centering
    \begin{tabular}{c|cc}
    \toprule
    Method &  Acc. w/o wd & Acc. w/ wd \\
    \midrule
    Baseline & 73.10 & 73.84\\
    $F4$  & \textbf{73.96} & 74.91 \\
    $TN$ (Ours) & \textbf{73.96} & \textbf{74.99} \\
    \bottomrule
    \end{tabular}
    \caption{ResNet18 trained on CIFAR100.}
\end{subtable}%
\begin{subtable}{.5\textwidth}
    \centering
    \begin{tabular}{c|cc}
    \toprule
    Method & Acc@1 & Acc@5 \\
    \midrule
    Baseline &  73.368 & \textbf{91.438}\\
    $F4$  & 73.388 & 91.300  \\
    $TN$ (Ours) &  \textbf{73.510} & 91.420 \\
    \bottomrule
    \end{tabular}
    \caption{ResNet34 trained on ImageNet. }
\end{subtable}
\end{table}

Training time comparison is presented in \Cref{table_time_per_epoch}. The wall clock time difference between training with and without the $TN$ regularization is almost negligible. The time performance of $TN$ is comparable with $F4$ \cite{singla2019fantastic}. Methods of \cite{delattre2023efficient, sedghi2018singular} require much more memory and cause ''Out of Memory'' error for large input sizes. The singular value clipping method \cite{ebrahimpour2023spectrum} is computationally expensive and is applied only once per 50 steps. The advantage of $TN$ over the Gram iteration-based approach \cite{delattre2023efficient} is that $TN$ bound can be updated with one iteration of HOPM during training, while the Gram iteration needs to start from scratch every time. Hence, Gram iteration increases time of training multiple times, e.g., see Section 5.5 in \cite{delattre2023efficient}, where time per epoch increased by a factor of 3. By contrast, with one iteration per training step, $TN$ bound can be used as a computationally efficient regularization with a negligible increase in time and memory.

\begin{table}
\centering
\caption{\label{table_time_per_epoch} Average time per epoch for training ResNet18 on CIFAR100 with different regularizers. For computing LipBound \cite{araujo2021lipschitz} we use 10 samples. We present time for clipping once per 50 iterations with PowerQR \cite{ebrahimpour2023spectrum}. }
\begin{tabular}{c|cccccccc}
\toprule
Method & Baseline & $TN$ (Ours) & $F4$ & LipBound & PowerQR & Gram iter. & Sedghi\\
\midrule
Time (s) & 141.0 & 143.6 & 143.7 &  242.5 & 149.6 & OOM & OOM\\
\bottomrule
\end{tabular}
\end{table}

 \Cref{fig:comparison} (left) presents a comparison of our $TN$ bound with $F4$. Throughout training, the $TN$ bound appears to be more precise. Regularization with the $TN$ bound significantly decreases the singular value of convolutions and makes the bound more precise (middle). One may wonder how regularization affects the Lipschitz constant of the entire network. The right figure suggests that regularization reduces the norm of composition of convolution with the subsequent BatchNorm, meaning that the upper bound \eqref{lip_const} on the Lipschitz constant of the whole network declines. The same effect is observed  for all layers, see Figures \ref{sup:fig:1}, \ref{sup:fig:2}, \ref{sup:fig:3}.
\begin{figure}
\centering
\begin{subfigure}{1.0\textwidth}
  \includegraphics[width=\linewidth]{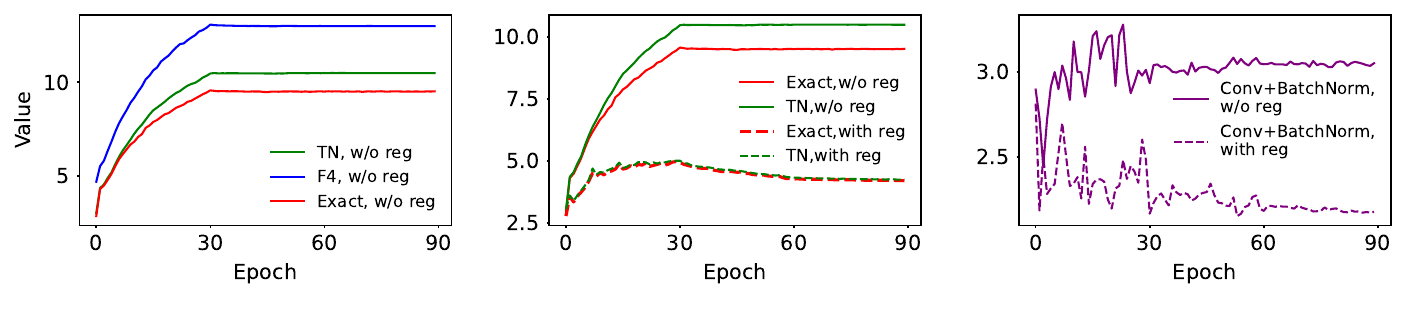}
\end{subfigure}
\caption{\label{fig:comparison} The behaviour of spectral norm bounds for the third layer of ResNet-18 during training on CIFAR100. The left figure compares tightness of $TN$ and $F4$ bounds when training without regularization. The middle figure shows the effect of training with and without $TN$ regularization. The right one demonstrates the influence of regularization on the spectral norm of composition of convolution and the subsequent BatchNorm layer. Similar plots for all layers are presented in Figures \ref{sup:fig:1}, \ref{sup:fig:2}, \ref{sup:fig:3} in Appendix. }\end{figure}

 \subsection{Orthogonal regularization}
The work \cite{wang2020orthogonal} proposes to use orthogonal regularization for training CNNs. The method utilizes convolution of the kernel with itself to estimate the divergence of the layer's Jacobian from orthogonal in the Frobenius norm, which leads to the following loss function:
\[
\mathcal{L}_{OCNN} = \|Conv(K, K) - I\|_F
\]
However, using the spectral norm of the Jacobian may seem more natural in this case, as it is an operator norm. Let $T$ be the Jacobian of the circular convolution with a kernel $K$. It is known that the product of two circulant matrices is also a circulant, thus $T^TT$ is also a Jacobian of some convolution. \cref{th:1} suggests that $\|T^TT-I\|_2 \leq \sqrt{h'w'}\|Conv(K, K, \texttt{padding=``circular''}) - I\|_\sigma$.
Thus, the following regularizer penalizes the largest difference between the singular values of the convolution and 1:
\[
\mathcal{L}_{2norm } = \|Conv(K, K, \texttt{padding=``circular''}) - I\|_{\sigma}
\]
We introduce one more regularizer based on the spectral norm of the kernel. As is known, the squared Frobenius norm of a matrix is equal to the sum of squares of its singular values and the ratio $\frac{\|A\|_2}{\|A\|_F}$ is minimized when all the singular values of matrix $A$ are equal. The Frobenius norm of the kernel can be used as approximation for the Frobenius norm of convolution Jacobian up to a constant factor, e.g., in the case of circular padding $n\|K\|_F=\|T\|_F$. Hence, we propose 
\[
\mathcal{L}_{Ratio} = \frac{\sqrt{hw}\|K\|_{\sigma}}{\|K\|_F}.
\]
We train Resnet18 on CIFAR100 dataset with the same hyperparameters as in the previous section and weight decay 1$e$-4. The regularized objective loss function is given as follows:
\[
\mathcal{L}= \mathcal{L}_{train} + \beta \sum_i \mathcal{L}^i_{reg},
\]
where $\mathcal{L}^i_{reg}$ denotes one of the regularization losses for the $i^{th}$ layer described above. The results are reported in \Cref{table_ortho}. Our regularizers improve generalization performance. $\mathcal{L}_{2norm}$ yields the best top-1 accuracy gain of more than 2\%.
\begin{table}
\centering
\caption{\label{table_ortho}ResNet18 CIFAR100 accuracy and average training time per epoch with different orthogonal regularizers.}
\begin{tabular}{C{2.2cm}|C{1.0cm}ccc}
\toprule
Method & $\beta$ & Acc@1 & Acc@5 & Time (s)\\
\midrule
Baseline & 0 &73.84 & 93.16 & 141.0 \\
OCNN & 0.1 & 75.32 & 93.45 & 150.1 \\
OCNN & 0.01& 75.32 & 93.30 & 150.1 \\
Ratio (Ours) & 1.0 & 74.98 & 93.50 & 147.0\\
Ratio (Ours) & 0.1 & 74.71 & 93.52 & 147.0\\
Ratio (Ours) & 0.01 & 74.71 & 93.45 & 147.0\\
2norm (Ours) & 1e-2 & 74.43 & 93.47 & 171.8\\
2norm (Ours) & 5e-3 & \textbf{75.99} & \textbf{94.08}  & 171.8\\
2norm (Ours) & 1e-3 & 75.83 & 93.92 & 171.8\\
\bottomrule
\end{tabular}
\end{table}

\section{Conclusion}
We propose to use the spectral norm of the kernel tensor to bound the spectral norm of convolutional layers. Our bound noticeably improves the accuracy of existing upper bounds that are independent of the resolution of input images. It also provides a trade-off between accuracy and computation efficiency compared with the other methods. 
 We demonstrate that this bound can be used as a regularizer for improving generalization of NNs. Furthermore, we propose two new regularizers based on it that enforce orthogonality of convolutions.

\section*{Acknowledgments}

The publication was supported by the grant for research centers in the field of AI provided by the Analytical Center for the Government of the Russian Federation (ACRF) in accordance with the agreement on the provision of subsidies (identifier of the agreement 000000D730321P5Q0002) and the agreement with HSE University \textnumero 70-2021-00139.
The calculations were performed in part through the computational resources of HPC facilities at HSE University~\cite{kostenetskiy2021hpc}.

\bibliographystyle{splncs04}

\newpage

\appendix

\section{Strided convolution}
\subsection{Proof of \Cref{main:th:2} \label{sec:A}}
\begin{proof}
The proof uses techniques of the proof of Theorem 2 in \cite{senderovich2022towards}.
In the case of convolution with zero padding, let $J_k \in \mathbb{R}^{\frac{n}{s} \times \frac{n}{s}}$ denote a matrix with ones along the $k$-th diagonal (if $k<0$ the ones are below the main diagonal). For periodic convolution, $J_k=P^k$, where $P$ is a permutation matrix: 
\begin{equation}\label{eq:permutation}P=\begin{bmatrix}
    0 & 1 & \hdots & 0\\
    \vdots & \vdots & \ddots  & \vdots\\
    0 & 0 & \hdots & 1 \\
    1 & 0 & \hdots & 0 \\ 
\end{bmatrix}.\end{equation}
The Jacobian $T_1$ of a convolution with a stride $s=1$ can be expressed as a summation of block doubly Toeplitz matrices:
\[T_1= \sum_{k=-h_1}^{h_2}\sum_{p=-w_1}^{w_2}J_k \otimes J_p \otimes K_{:, :, k, p}.\]
 We define a ``strided Toeplitz matrix'' as a Toeplitz matrix where each row is shifted by $s > 1$, resulting in a shape of $\frac{n}{s} \times n$. For example, when $n=6$ and $s=2$, the strided Toeplitz matrix takes the form:
\[A=\begin{bmatrix}
a & b & c & d & 0 & 0\\
e & f & a & b & c & d\\
0 & 0 & e & f & a & b
\end{bmatrix}.\]
Slicing this matrix by selecting columns at intervals determined by the stride s, such as columns $(0, s, 2\cdot s, \dots (\frac{n}{s}-1)\cdot s )$, results in a standard Toeplitz matrix. We can write this property as
\[A=\sum_{j=0}^{s-1}A_{j}(I_{\frac{n}{s}} \otimes e_j^T).\]
Let $B_i \in \mathbb{R}^{\frac{n^2}{s^2}c_{out} \times \frac{n^2}{s}c_{in}}$ be a block Toeplitz matrix. Each of its blocks $ B_{ik} \in \mathbb{R}^{\frac{n}{s}c_{out} \times c_{in}n}$ is a block Toeplitz matrix with stride $s$. As in the previous example, we can denote slices of $B_{ik}$ as $B_{ikj} \in \mathbb{R}^{c_{out}\frac{n}{s} \times c_{in}\frac{n}{s}}$, where each $B_{ikj}$ is a block Toeplitz matrix with blocks of size $c_{out} \times c_{in}$. For brevity, we denote $k_1 = \lceil \frac{-h_1-i}{s}\rceil, k_2 = \lfloor \frac{h_2-i}{s}\rfloor, p_1 = \lceil \frac{-w_1-i}{s}\rceil, p_2 = \lfloor \frac{w_2-i}{s}\rfloor$.
\[\begin{aligned} B_i &=\sum_{k=k_1}^{k_2} J_k \otimes B_{ik} = \sum_{k=k_1}^{k_2} J_k \otimes \sum_{j=0}^{s-1} B_{ikj} ((I_{\frac{n}{s}} \otimes e_i^T)\otimes I_{c_{in}}) =\\
&=\sum_{k=k_1}^{k_2} \sum_{j=0}^{s-1} J_k \otimes B_{ijk} \otimes  e_j^T.\end{aligned}\]
Strided linear transformation matrix may be expressed as follows
\[\begin{aligned}T_s& = \sum_{i=0}^{s-1}B_i ((I_{\frac{n}{s}} \otimes e_i^T)\otimes I_{c_{in}n}) = \sum_{i=0}^{s-1}B_i \otimes e_i^T  = \\
&=\sum_{i=0}^{s-1}\sum_{j=0}^{s-1}\sum_{k=k_1}^{k_2}   J_k \otimes B_{ikj} \otimes  e_j^T \otimes e_i^T = \\
& = \sum_{i=0}^{s-1}\sum_{j=0}^{s-1}\sum_{k=k_1}^{k_2} \sum_{p=p_1}^{p_2}  J_k \otimes J_p \otimes K_{:, :, i+ks, j+ps} \otimes  e_j^T \otimes e_i^T \\
&= \sum_{q=0}^{s^2-1}\sum_{k=\lceil \frac{-h_1-q/s}{s}\rceil}^{\lfloor \frac{-h_2-q/s}{s}\rfloor} \sum_{p=\lceil \frac{-w_1-q\%s}{s}\rceil}^{\lfloor \frac{-w_2-q\%s}{s}\rfloor} J_k \otimes J_p \otimes K_{:, :, \lfloor \frac{q}{s}\rfloor +ks, q \Mod s +ps} \otimes  e_q^T =\\
& = \sum_{k=\lceil \frac{-h_1-s+1}{s}\rceil}^{\lfloor \frac{h_2}{s}\rfloor} \sum_{p=\lceil \frac{-w_1-s+1}{s}\rceil}^{\lfloor \frac{w_2}{s}\rfloor}  J_k \otimes J_p \otimes Q_{:, :, k, p}.
\end{aligned}\]
Thus, strided convolution may be viewed as a convolution with another kernel $Q \in \mathbb{R}^{ c_{out}\times c_{in} s^2 \times \lceil\frac{h}{s} \rceil \times \lceil\frac{w}{s} \rceil}$, where
\[K_{c, d, a, b} = Q_{c, ds^2 + s(a\Mod s)+b\Mod s, \lfloor\frac{a}{s}\rfloor, \lfloor\frac{b}{s}\rfloor}.\]
From \Cref{main:th:1}, it follows that $\|Q\|_{\sigma} \leq \|T_s\|_2 \leq \sqrt{\lceil \frac{h}{s}\rceil \lceil \frac{w}{s}\rceil}\|Q\|_{\sigma}$.

Here is an example of the code in pytorch for obtaining $Q$ from $K$:
\begin{verbatim}
cout, cin, h, w = K.shape
if s != 1:
    if h % s != 0 and w % s != 0:
          p = (0, s - h % s, 0, s - w % s)
          K = F.pad(K, p, 'constant' , 0)
    Q = K.reshape(cout, cin, ceil(h/s), s, ceil(w/s), s)
    Q = Q.permute(0, 1, 3, 5, 2, 4) 
    Q = Q.reshape(cout, cin*s*s, ceil(h/s), ceil(w/s))
else:
    Q = K
\end{verbatim}
\end{proof}

\subsection{Experiments with a strided convolution \label{sec:stride}}

The bounds computed using \Cref{main:th:2} for a strided convolution with various kernel sizes are presented in the \Cref{table_stride}. We observe that accuracy of the bounds depends on the size of the padding. Large number of zeros added to the kernel during padding reduces the accuracy. 
\begin{table}
\centering
\caption{\label{table_stride} Comparison of the $TN$ bound and the $F4$ bound for a convolution with stride. The kernels are sampled from the Gaussian distribution. The exact value was computed with the power method \cite{ryu2019plug} for the input size $32 \times 32$. We use 150 iterations for all the methods.}
\begin{tabular}{ lc|ccc|cc}
\toprule
Sizes & Stride & \makecell{$\|T\|_2$ } & \makecell{$F4$ } & \makecell{$TN$\\ (Ours) } & $\dfrac{F4}{\|T\|_2}$ & $\dfrac{TN}{\|T\|_2}$ \vspace{0.5mm}\\
\midrule
64, 64, 3, 3 & 2 & 38.14 & 53.85 & 46.72 & 1.412 & \textbf{1.225} \\
512, 512, 3, 3 & 2 & 106.85 & 153.33 & 134.87 & 1.435 & \textbf{1.262} \\
64, 64, 5, 5 & 2 & 60.9 & 106.91 & 72.69 & 1.756 & \textbf{1.194} \\
512, 512, 5, 5 & 2 & 172.57 & 301.7 & 204.19 & 1.748 & \textbf{1.183} \\
64, 64, 7, 7 & 2 & 86.51 & 175.41 & 100.3 & 2.028 & \textbf{1.16} \\
512, 512, 7, 7 & 2 & 238.99 & 495.99 & 273.23 & 2.075 & \textbf{1.143} \\
64, 64, 3, 3 & 4 & 31.93 & 31.93 & 31.93 & 1.0 & \textbf{1.0} \\
512, 512, 3, 3 & 4 & 90.12 & 89.98 & 90.12 & 0.998 & \textbf{1.0} \\
64, 64, 5, 5 & 4 & 50.94 & 80.58 & 79.09 & 1.582 & \textbf{1.553} \\
512, 512, 5, 5 & 4 & 145.03 & 228.49 & 225.15 & 1.575 & \textbf{1.552} \\
64, 64, 7, 7 & 4 & 70.67 & 86.77 & 79.13 & 1.228 & \textbf{1.12} \\
512, 512, 7, 7 & 4 & 199.65 & 246.55 & 225.43 & 1.235 & \textbf{1.129} \\
\bottomrule
\end{tabular}
\end{table}

\subsection{Spectral density function for strided convolutions}

\Cref{main:th:2} shows that strided convolution with a kernel $K$ can be expressed as a regular convolution with a kernel $Q$. Therefore, it admits a  representation as a block doubly Toeplitz matrix with $T_{k, l}$:
\[
    T_{k, l} = Q_{:, :, k+h_1, l+w_1}
\]

Additionally, by permuting output channel dimension of kernel $Q$ (grouping modulo $s^2$), $T_{k, l}$ becomes a concatenation of $s^2$ matrices $T^{1}_{k, l}, \dots, T^{s^2}_{k, l}$, where $T^i$ correspond to some convolutions with possibly different kernel sizes (up to padding). Since the padding of kernels does not affect the spectral density function, spectral density of such a matrix is a concatenation of $s^2$ spectral density function with possibly different kernel sizes:
\[
    F(\omega_1, \omega_2) = 
    \begin{pmatrix}
        F^{1}(\omega_1, \omega_2) & \dots & F^{s^2}(\omega_1, \omega_2)
    \end{pmatrix}
\]

Contrary to the more sophisticated spectral density function proposed in Section VI.A in \cite{yi2020asymptotic}, \Cref{main:lemma:1} also holds true since $Q$ is a convolution kernel. This shows that bounding the spectral norm of $T$ can be alternatively done on $F$, which can allow for further analysis of strided convolutions.

\section{Real and complex rank-1 approximation of tensors \label{sec:B}}
In this section, we will present an example of a real tensor, for which real and complex best rank-1 approximations do not coincide with each other.  Let us use notation $\|K\|_{\sigma,\mathbb{C}}$ in the case when supremum in the definition of the spectral norm is taken over complex vectors, and $\|K\|_{\sigma,\mathbb{R}}$ in the case of real vectors. The $\sqrt{hw}\|K\|_{\sigma,\mathbb{C}}$ bounds from above  the spectral norm of a convolution for any input size, while this is not the case for $\sqrt{hw}\|K\|_{\sigma,\mathbb{R}}$ as we show in this section. Our example was inspired by \cite{draisma2018best, de2008tensor}, see also examples in \cite{friedland2018nuclear, friedland2020spectral}.

Let a tensor $K \in \mathbb{R}^{2\times2\times2\times2}$ be $K = (e_1+ie_2)^{\circ4} + (e_1-ie_2)^{\circ4}$, where $e_1=(1,0)^T, e_2=(0,1)^T$, $\circ$ denotes tensor product. $K$ is a real-valued tensor with an unfolding
\[K.reshape(2, 8, \texttt{order='c'})=\begin{bmatrix}
2 & 0 & 0 & -2 & 0 & -2 & -2 & 0\\
0 & -2 & -2 & 0 & -2 & 0 & 0 & 2\\
\end{bmatrix}.\]
$K$ is a supersymmetric tensor, which implies that it has a symmetric real best rank-1 approximation \cite{friedland2013best}. Let $x$ be 
\[x = \argmax_{\|x\|_2=1,\, x\in\mathbb{R}^2} |\llbracket K,x,x,x,x\rrbracket|.\]
Thus, we can use the parametrization $x=(\cos\alpha, \sin\alpha)^T$.
\[\begin{aligned}|\llbracket K,x,x,x,x\rrbracket| &= |(x^T(e_1+ie_2))^4 + (x^T(e_1-ie_2))^4|=\\&=|(\cos\alpha+i\sin\alpha)^4 + (\cos\alpha-i\sin\alpha)^4 |=\\
&=|\cos(4\alpha)+i\sin(4\alpha) +\cos(4\alpha)-i\sin(4\alpha)|= \\&=2|\cos(4\alpha)| \leq 2\end{aligned}\]
Let $\alpha=\frac{\pi}{4}$ and $x=\left(\frac{1}{\sqrt{2}}, \frac{1}{\sqrt{2}}\right)$, then the best real-valued rank-1 approximation of $K$ is a tensor $2x^{\circ4}$ and the corresponding singular value is $\|K\|_{\sigma,\mathbb{R}}=2$. Hence, the bound is $\sqrt{hw}\|K\|_{\sigma,\mathbb{R}}=4$.

The spectral norm of the above unfolding is $\|K.reshape(2, -1)\|_{2}=4$. From \Cref{main:pr:unfolding}, we conclude that $\|K\|_{\sigma,\mathbb{C}} \leq \|K.reshape(2, -1)\|_2=4$. The complex rank-1 approximation of $K$ is a tensor $4x^{\circ4}$, where $x=\left(\frac{1}{2} + \frac{1}{2}i, -\frac{1}{2} + \frac{1}{2}i\right)$, the corresponding singular value is $\|K\|_{\sigma,\mathbb{C}}=4$ (which coincides with singular value of the unfolding) and the bound is $\sqrt{hw}\|K\|_{\sigma,\mathbb{C}}=8$.

We have computed the spectral norm of the convolution with circular padding exactly for different input sizes using \cite{sedghi2018singular}. 
For input size $4\times 4$ (and sizes $4n \times 4n$) the spectral norm of the circular convolution is 8, which equals to the bound $\sqrt{hw}\|K\|_{\sigma,\mathbb{C}}$ and is, therefore, larger than $\sqrt{hw}\|K\|_{\sigma,\mathbb{R}}=4$.

This example illustrates the fact that in order to compute the bound $\sqrt{hw}\|K\|_{\sigma,\mathbb{C}}$ correctly (so that it upper bounds spectral norm of the convolution), we need to look for the best rank-1 approximation using the complex version of HOPM.

\section{Gradient computation \label{sec:C}}
Tensors $P_{real}, P_{im} \in \mathbb{R}^{2\times2\times2\times2}$ consist of $-1, 0, 1$. $P_{real}$ is equal to the real part of the tensor $(e_1+ie_2)^{\otimes 4}$, and $P_{im}$ is equal to its imaginary part. Here $e_1=(1, 0)^T, e_2=(0,1)^T$. The unfoldings of these tensors are as follows: 
\[P_{real}.reshape(2, 8, \texttt{order='c'})=\begin{bmatrix}
1 & 0 & 0 & -1 & 0 & -1 & -1 & 0\\
0 & -1 & -1 & 0 & -1 & 0 & 0 & 1\\
\end{bmatrix}\]
\[P_{im}.reshape(2, 8, \texttt{order='c'})=\begin{bmatrix}
0 & 1 & 1 & 0 & 1 & 0 & 0 & -1\\
1 & 0 & 0 & -1 & 0 & -1 & -1 & 0\\
\end{bmatrix}\]
Tensors $P_{real}$ and $P_{im}$ are fixed, so we can precompute them beforehand and use later. For a faster computation of the gradient, we only need to compute the gradients $\nabla_K real, \nabla_K im$ according to the \eqref{main:eq:grad}, because we already have the values $real$ and $im$ from the forward pass.
One can also use automatic differentiation instead of the derived formulas.

\section{Higher Dimensional Convolution \label{sec:D}}
\subsection*{Proof of \Cref{main:th:higher_dim}}
\begin{proof}
An unfolding $R=K_{(1,2 3\dots d+2)}$ of the kernel $K \in \mathbb{R}^{c_{out}\times c_{in} \times h_1 \times \ldots \times h_d}$ is a submatrix of a multi-level Toeplitz matrix $T$, hence, $\|R\|_2\leq \|T\|_2$. As is shown in \Cref{main:pr:unfolding}, the spectral norm of an unfolding upper bounds tensor norm, so we can write the lower bound:
\[\|K\|_\sigma \leq \|R\|_2 \leq \|T\|_2\]

To prove the upper bound, we first need to state that inequality $\|T\|_2 \leq \|F\|_2$ holds true for the multidimensional convolution. For the readers' convenience, here we present the proof of this fact, which directly follows \cite[Lemma 4]{yi2020asymptotic} and \cite[Section VI.B]{yi2020asymptotic}.

Following \cite[Section VI.B]{yi2020asymptotic}, the Jacobian $T$ can be written as
$$T = \sum_{[k_1]<n_1} \dots\sum_{[k_d]< n_d}[J^{k_1}_{n_1} \otimes \dots \otimes J^{k_d}_{n_d}] \otimes T_k, $$
$$T_{k} = \frac{1}{(2\pi)^d}\int_{\Omega}^{}F(\tau)e^{-i\langle k, \tau \rangle } d\tau,$$
where $J^k_n$ is a matrix of size $n\times n$ with ones along the $k^{th}$ diagonal and zeros elsewhere in the case of zero padding or $J^k_n=P^k$ in the case of circular padding ($P$ is a permutation matrix defined in \eqref{eq:permutation}). $\Omega=[-\pi, \pi]^d$, $k=(k_1, \dots, k_d), \tau=(\tau_1, \dots, \tau_d)$. We can write the generating function as
$$F(\tau) = \sum_{k} T_{k} e^{i\langle k, \tau \rangle }.$$
Let $u \in \mathbb{R}^{c_{out}n^d}, v \in \mathbb{R}^{c_{in}n^d}$ be the singular vectors of $T$. Let us divide $u$ and $v$ into $n^d$ subvectors of size $c_{out}$ and $c_{in}$. Let $u_{k} \in \mathbb{R}^{c_{out}}, v_{m} \in \mathbb{R}^{c_{in}}$ be the $k^{th}$ and $m^{th}$ subvectors corresponding to $(T)_{k, m} = T_{k - m}$, $1\leq m, k \leq n^d$, so we can write:
$$\begin{aligned}&\|T\|_2 = u^TTv=\sum_k \sum_{m} u_{k}^T T_{k - m} v_{m}=\sum_k \sum_m\frac{1}{(2\pi)^d}\int_{\Omega} u_{k}^T F(\tau)e^{-i\langle k - m, \tau \rangle }  v_{m}  d\tau =\\&{= \frac{1}{(2\pi)^d}\int_{\Omega}u(\tau)^TF(\tau)v(\tau) d\tau,}\end{aligned}$$
where
$$u(\tau)=\sum_k u_{k}e^{-i\langle k, \tau \rangle }, \quad  v(\tau)=\sum_m v_{m}e^{i\langle m, \tau \rangle }.$$
Following \cite[Lemma 4]{yi2020asymptotic}:
$$\begin{aligned}\|T\|_2 &\leq   \frac{1}{(2\pi)^d}\int_{\Omega}\|F\|_2\|u(\tau)\|_2\|v(\tau)\|_2 d\tau \leq  
\\&
\leq  \|F\|_2 \frac{1}{(2\pi)^d}\sqrt{\int_{\Omega}\|u(\tau)\|^2_2d\tau}\sqrt{\int_{\Omega}\|v(\tau)\|^2_2 d\tau }
 = \|F\|_2 \|u\|_2\|v\|_2= \|F\|_2.\end{aligned}$$
 Analogously to \cref{main:th:1}, 
$$\begin{aligned}&\|F\|_2 = \sup_{u_1, u_2}|\llbracket K; u_1, u_2, z_1 \dots z_{d}\rrbracket | =  \sqrt{h_1 \dots h_{d}} \sup_{u_1, u_2}|\llbracket K; u_1, u_2, \frac{z_1}{\sqrt{h_1}} \dots \frac{z_{d}}{\sqrt{h_{d}}}\rrbracket |\leq \\& \leq \sqrt{h_1 \dots h_{d}} \|K\|_{\sigma}.\end{aligned}$$
Thus, $\|T\|_2\leq \|F\|_2\leq \|K\|_{\sigma}$, which completes the proof.

\end{proof}

\section{Spectral norm regularization}
\cref{table_hyperparameters} presents accuracy for ResNet18 on CIFAR100 and ResNet34 on ImageNet with different regularizers and regularization coefficients $\beta$.

\begin{table}
\caption{\label{table_hyperparameters} Test accuracy with different regularizers.}
\begin{subtable}{.5\textwidth}
    \centering
    \begin{tabular}{c|ccc}
    \toprule
    Method & $\beta$ & Acc. w/o wd & Acc. w/ wd \\
    \midrule
    Baseline & 0 & 73.10 & 73.84\\
    $F4$  & 0.0016& 73.55 & 74.83 \\
    $F4$  & 0.0018& 73.69 & 74.44 \\
    $F4$  & 0.0022& \textbf{73.96} & 74.91 \\
    $TN$ (Ours) & 0.0016& 73.77&  74.76 \\
    $TN$ (Ours) & 0.0018& 73.63 & 74.77 \\
    $TN$ (Ours) & 0.0022& \textbf{73.96} & \textbf{74.99} \\
    \bottomrule
    \end{tabular}
    \caption{ResNet18 trained on CIFAR100}
\end{subtable}%
\hspace{2mm}
\begin{subtable}{.5\textwidth}
    \centering
    \begin{tabular}{c|ccc}
    \toprule
    Method & $\beta$ & Acc@1 & Acc@5 \\
    \midrule
    Baseline & 0 &  73.368 & \textbf{91.438}\\
    $F4$ & 2e-3 & 73.034 & 91.136  \\
    $F4$  & 1e-4 & 73.388 & 91.300  \\
    $F4$  & 5e-4 &  73.322 &  91.258 \\
    $TN$ (Ours) & 2e-3 &  73.086 & 91.150  \\
    $TN$ (Ours) & 1e-4 & \textbf{73.510} & 91.420 \\
    $TN$ (Ours) & 5e-4 & 73.372  & 91.326 \\
    \bottomrule
    \end{tabular}
    \caption{ResNet34 trained on ImageNet }
\end{subtable}
\end{table}
\vspace{-5mm}

\section{Spectral norm of convolutional layers of CNNs \label{sec:resnet}}

\Cref{fig:1} demonstrates the behaviour of the bounds during the training of ResNet-18, showing that our bound remains an accurate approximation of the exact spectral norm throughout the training process. We reinitialize the singular vectors every epoch from random approximation and use one iteration per training step to update the singular vectors. We observe that this strategy is good enough to maintain the precision during the training process.

To constrain the Lipschitz constant of an entire network, we should take into account spectral norm of each layer. Convolutional layers in CNNs are usually followed by batch normalization. Their concatenation forms a linear transformation for which we can estimate the spectral norm. In our experiments, we apply regularization only to the convolutional layers. \cref{fig:2} demonstrates that training with regularization noticeably decreases  the spectral norm of convolutions. In addition, our $TN$ bound becomes more accurate when regularization is applied. Although we do not regularize BatchNorm layers, \cref{fig:3} shows that the composition's spectral norm decreases when regularization is applied to convolutions.
Similarly to~\cite{delattre2023efficient}, in Figure~\ref{fig:all_methods}, we compare our method with the alternative approaches for different configurations of kernel tensors.

\begin{figure}
\centering
\begin{subfigure}{1.0\textwidth}
  \includegraphics[width=\linewidth]{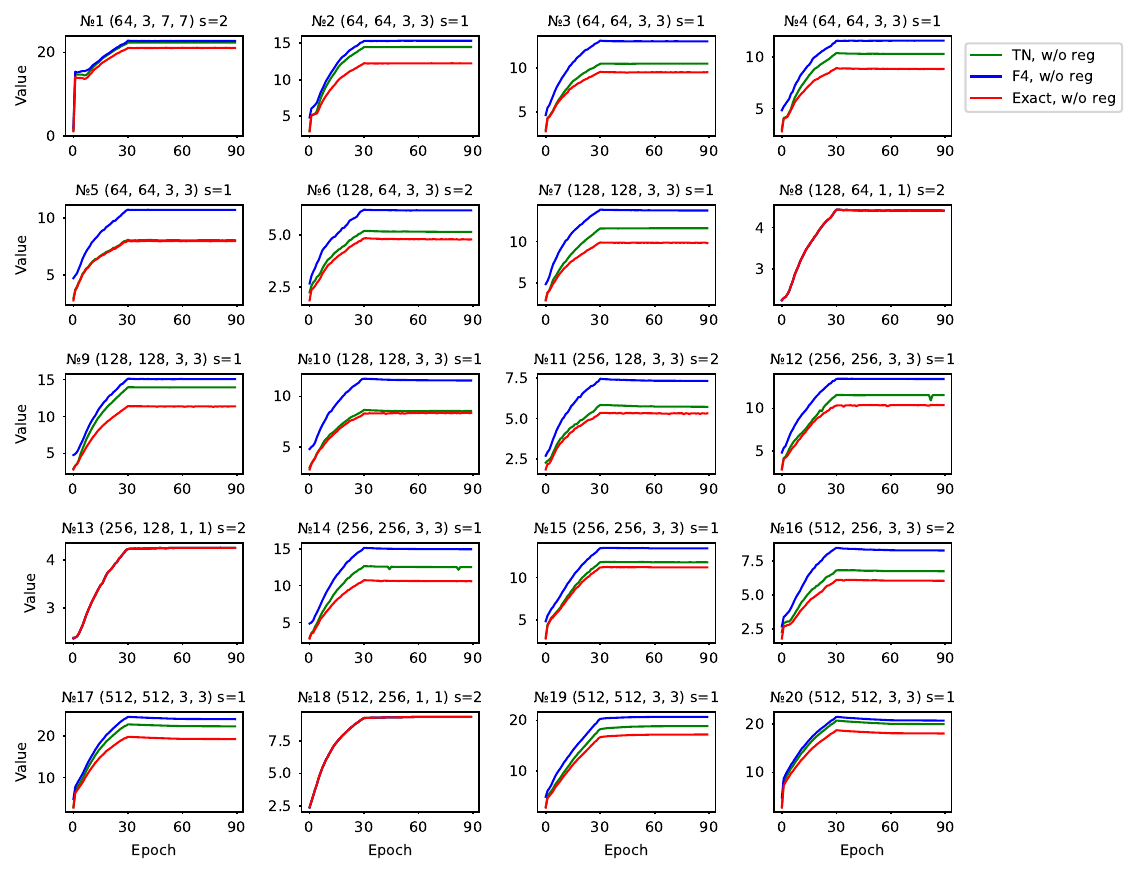}
\end{subfigure}
\caption{\label{fig:1} The plot compares our $TN$ bound with the $F4$ bound for convolutional layers of ResNet18 trained on CIFAR100. We do not use any regularization or weight decay in this experiment.}
\end{figure}

\begin{figure}
\centering
\begin{subfigure}{1.0\textwidth}
  \includegraphics[width=\linewidth]{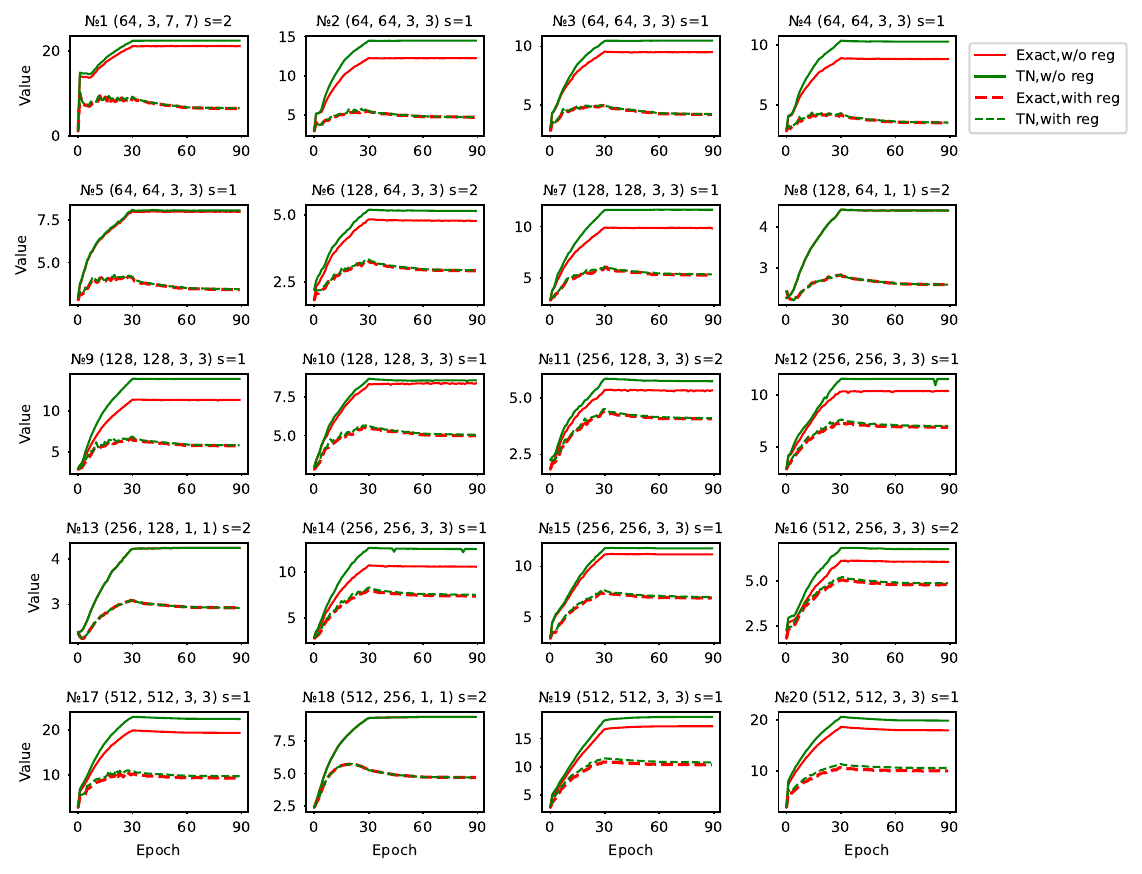}
\end{subfigure}
\caption{\label{fig:2} Effect of regularization with $TN$ bound on the spectral norm of convolutional layers of ResNet18 trained on CIFAR100. }
\end{figure}

\begin{figure}
\centering
\begin{subfigure}{1.0\textwidth}
  \includegraphics[width=\linewidth]{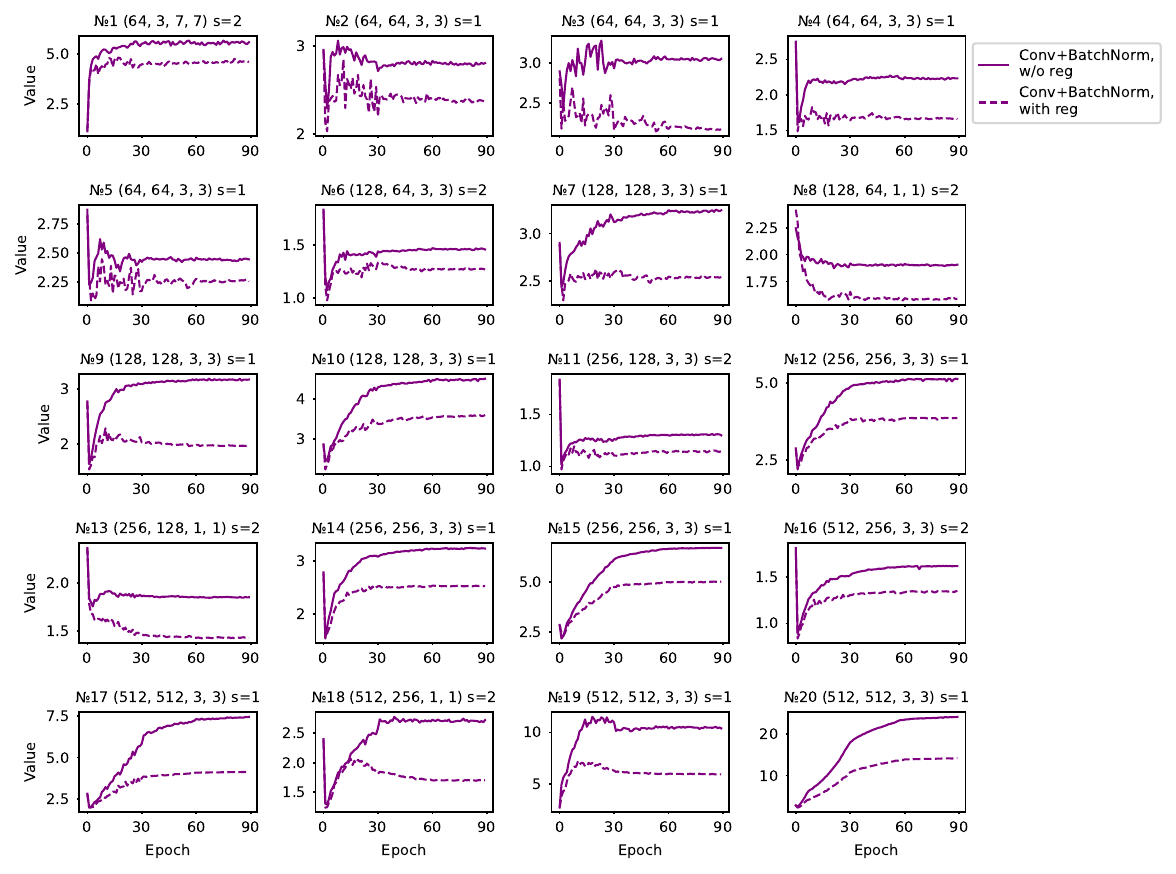}
\end{subfigure}
\caption{\label{fig:3} The behaviour of the spectral norm of composition of convolution and subsequent BatchNorm layers for ResNet18 trained on CIFAR100 with and without $TN$ regularization.}
\end{figure}

\begin{figure}\centering
\begin{subfigure}{1.0\textwidth}
  \includegraphics[width=\linewidth]{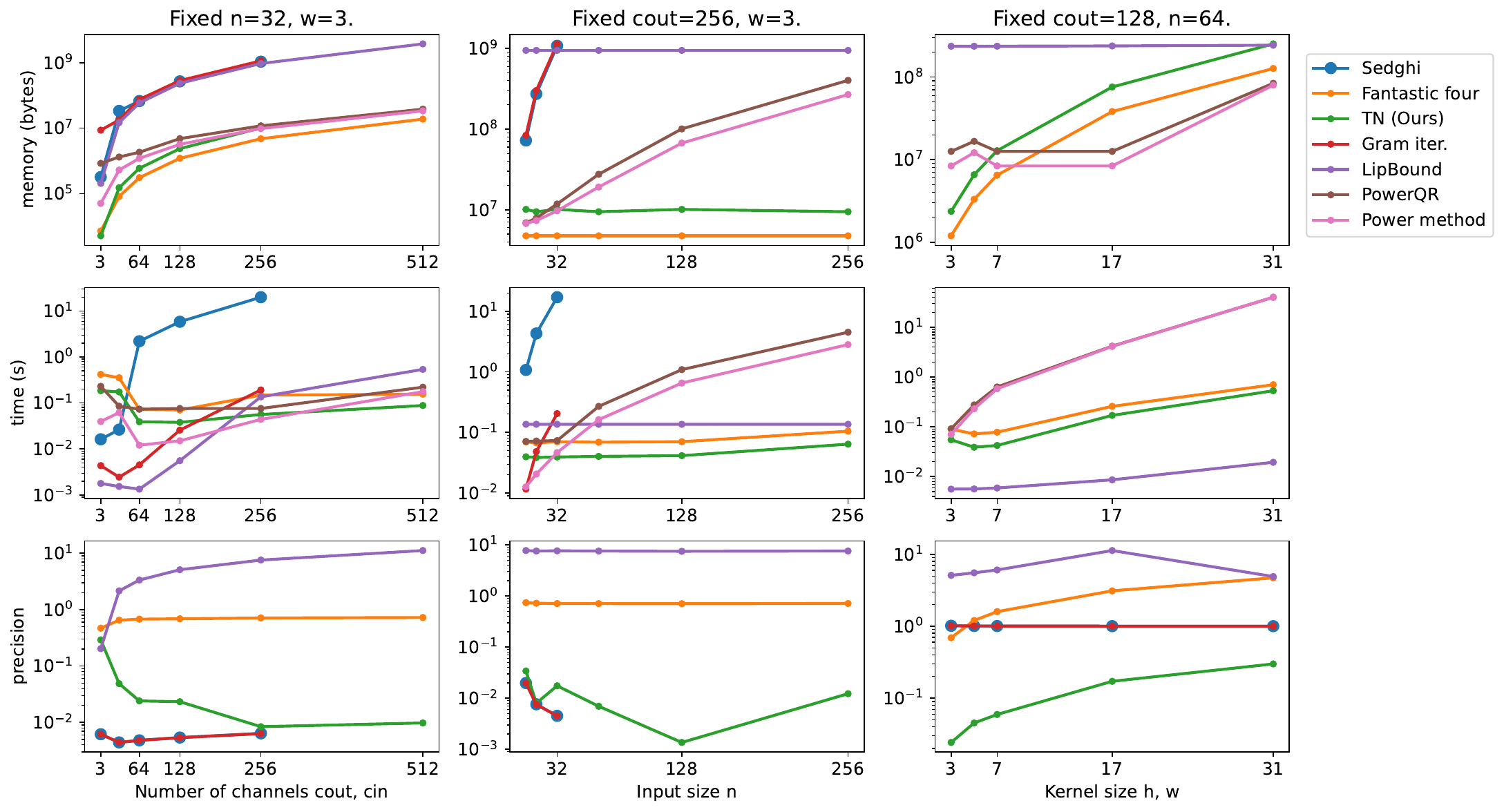}
\end{subfigure}
\caption{\label{fig:all_methods} Comparison of existing methods in terms of memory consumption, time efficiency and precision for  convolution with zero padding and kernels with entries sampled from $\mathcal{N}(0, 1)$. We measure the precision as $|\sigma_{method}-\sigma_{ref}| / \sigma_{ref}$, where $\sigma_{ref}$ is a highly accurate reference value obtained using the power method. We do not plot precision of PowerQR \cite{ebrahimpour2023spectrum} as it gives the exact value. The power method and PowerQR \cite{ryu2019plug, ebrahimpour2023spectrum} are accurate, but their time complexity noticeably depends on $n$ and~$c_{out}$. LipBound \cite{araujo2021lipschitz} produces errors larger than the other methods.
 Gram iteration \cite{delattre2023efficient} is fast, but consumes as much memory as the method by Sedghi \etal \cite{sedghi2018singular} and is inapplicable for large $c_{out}, c_{in}$ and $n$. Our method is memory efficient and provides a trade-off between speed and accuracy, improving the Fantastic four bound~\cite{singla2019fantastic}. }
\end{figure}
\end{document}